\documentclass{article}
\usepackage[final]{colm2026_conference}

\usepackage{microtype}
\usepackage{hyperref}
\usepackage{url}
\usepackage{booktabs}
\usepackage{amsfonts}
\usepackage{amssymb}
\usepackage{amsmath}
\usepackage{nicefrac}
\usepackage{xcolor}
\usepackage{graphicx}
\usepackage{amsthm}

\newtheorem{proposition}{Proposition}
\newtheorem{remark}{Remark}
\newtheorem{theorem}{Theorem}
\newtheorem{corollary}[theorem]{Corollary}

\usepackage{lineno}

\definecolor{darkblue}{rgb}{0, 0, 0.5}
\hypersetup{colorlinks=true, citecolor=darkblue, linkcolor=darkblue, urlcolor=darkblue}

\title{Super Weights in LLMs and the Failure of Selective Training}

\author{%
  Shreyas Subramanian, \; Adewale Akinfaderin, \; Akarsha Sehwag \\
  Amazon Web Services \\
  \texttt{\{subshrey, akinfaa, akshseh\}@amazon.com} \\
}

\begin{document}

\ifcolmsubmission
\linenumbers
\fi

\maketitle
\raggedbottom

\begin{abstract}
Recent work identified Super Weights, individual parameters whose removal degrades model performance by orders of magnitude. We show that this degradation due to pruning Super Weights does not universally apply to all LLMs. Furthermore, if these parameters are so important, Super Weight-aware training should be effective. We show the opposite. Training Super Weights in isolation (100 to 8,192 parameters) drops accuracy to random-guessing levels on both OLMo-1B and OLMo-7B, and expanding to local neighborhoods of up to 36K parameters provides no improvement. The failure is specific to Super Weight coordinates: training an equal number of randomly chosen positions in the same \texttt{down\_proj} layers instead improves over the baseline, so the collapse comes from targeting Super Weights, not from sparsity itself. Vanilla LoRA, updating every position in attention weight matrices through low-rank structure, succeeds with only 0.16\% of parameters, and applying the same low-rank update to \texttt{down\_proj} succeeds as well. A 10-seed ablation confirms that constraining LoRA updates at positions corresponding to Super Weight coordinates yields statistically indistinguishable results. These findings establish that parameter importance does not imply parameter trainability in isolation, and that effective fine-tuning relies on structured decompositions over entire layers rather than targeting individually important weights.
\end{abstract}

\section{Introduction}

Parameter-efficient fine-tuning (PEFT) methods like LoRA \citep{hu2022lora} achieve performance comparable to full fine-tuning while updating only 0.1--1\% of parameters. \citet{aghajanyan2021intrinsic} showed that fine-tuning operates in a low intrinsic dimension, suggesting that even random low-dimensional subspaces suffice. A natural hypothesis follows: if random subspaces work, then deliberately targeting the most important parameters should work at least as well. Recent work by \citet{yu2025SuperWeight} identified Super Weights, individual parameters whose removal increases perplexity by orders of magnitude, providing an ideal test case for this hypothesis. We show that the hypothesis is wrong. Training Super Weights and their neighborhoods in isolation fails completely, and this failure is specific to those coordinates rather than to sparse training in general, establishing that parameter importance and parameter trainability in isolation are fundamentally different properties.

We investigate these questions through experiments on OLMo-1B and OLMo-7B, spanning pruning, direct training, neighborhood training, LoRA-dproj-SW-freeze, LoRA-$\Delta W$-SW-freeze, and multi-seed ablation studies. We begin by addressing a methodological limitation of prior work: \citet{yu2025SuperWeight} identified Super Weights from a single forward pass. We validate consistency across 20 diverse samples, finding that 9 weight positions create activation spikes in 100\% of inputs, confirming Super Weights are structural properties of the pretrained model rather than input-specific artifacts. 

Our experiments then reveal a clear pattern. Training only Super Weights (100--8,192 parameters) drops accuracy to random-guessing levels, and expanding to local neighborhoods of up to 36K parameters provides no improvement. Two controls on the \texttt{down\_proj} layers isolate the cause. Training an equal number of randomly chosen \texttt{down\_proj} positions (Super Weight coordinates excluded) improves over the baseline rather than collapsing, so sparsity alone is not the problem. Applying a low-rank update to \texttt{down\_proj} also succeeds, so the module is not the problem either. The collapse is specific to targeting Super Weight coordinates and their entangled neighborhoods. Meanwhile, vanilla LoRA achieves 67\% accuracy (above the 61\% baseline) by updating every position in the weight matrix through rank-8 decomposition, despite training only 0.16\% of parameters. LoRA-dproj-SW-freeze (freezing attention positions at coordinates matching \texttt{down\_proj} Super Weight indices) and LoRA-$\Delta W$-SW-freeze (freezing top magnitude weights in the pre-trained $\Delta W$) have no measurable effect. A 10-seed ablation shows statistically indistinguishable results ($p > 0.05$), with 80\% of seeds producing identical predictions. LoRA succeeds because its low-rank structure coordinates updates across entire layers, while training isolated Super Weight positions breaks the relational structure those weights depend on.

Our main contributions are: (1) validating Super Weight consistency across diverse models, extending the single forward pass analysis of \citet{yu2025SuperWeight}; (2) showing that training Super Weights and their neighborhoods in isolation fails regardless of parameter count, while an equal-size random-position control on the same \texttt{down\_proj} layers succeeds, isolating the failure to the Super Weight coordinates rather than to sparsity or module choice; (3) demonstrating that constraining attention LoRA updates at positions corresponding to Super Weight coordinates has no measurable effect, showing that the attention representations feeding into Super Weight layers are robust to position-level restrictions; (4) showing that LoRA's own highest-magnitude update positions are not individually critical, in contrast to base model Super Weights; (5) identifying layer-wide coordination as the key factor separating successful and unsuccessful fine-tuning strategies; and (6) providing theoretical analysis from two complementary perspectives: an intrinsic dimensionality argument showing that sparse updates capture only a vanishing fraction of the fine-tuning subspace, and an optimizer dynamics argument showing that curvature at Super Weight positions implicitly suppresses their updates (Appendices~\ref{app:theory_intrinsic}--\ref{app:theory_optimizer}).

\section{Related work}

Recent work has revealed that individual parameters and activations in LLMs carry disproportionate importance. \citet{dettmers2022gpt3} first documented emergent outlier features in transformer hidden states at scale, showing that a small set of activation dimensions contain values far larger than others and that these outliers are critical for preserving model quality during quantization. \citet{sunmassive} further characterized these as ``massive activations": persistent, position-fixed activation outliers that remain constant regardless of input. \citet{yu2025SuperWeight} traced these activation outliers to their source, identifying Super Weights: individual scalar parameters whose removal increases perplexity by orders of magnitude. At a broader level, \citet{liu2025relation} demonstrated that groups of neurons in LLMs encode relation-specific knowledge, exhibiting cumulativity (distributed across many neurons), versatility (shared across relations), and interference (deactivating one relation's neurons can affect others). Our work extends this line of research by validating Super Weight consistency across inputs and investigating whether these critical parameters can be successfully fine-tuned in isolation.

Parameter-efficient fine-tuning (PEFT) methods aim to adapt large pretrained models while updating only a small fraction of parameters. LoRA \citep{hu2022lora} achieves this through low-rank decomposition of weight updates, training only small matrices $A$ and $B$ such that $\Delta W = BA$. \citet{zhangadaptive} extended this with adaptive rank allocation across layers based on importance scores. At the opposite extreme, \citet{zaken2022bitfit} showed that updating only bias terms can achieve competitive performance on certain tasks, demonstrating that selective parameter training can sometimes succeed. \citet{aghajanyan2021intrinsic} provided a theoretical foundation by showing that fine-tuning operates in a very low intrinsic dimension: as few as 200 randomly projected parameters can capture 90\% of full fine-tuning performance for RoBERTa on MRPC, and pretraining implicitly minimizes this intrinsic dimension. Our work complements these findings by showing that while low intrinsic dimensionality explains \textit{why} few parameters suffice, the \textit{structure} of those parameters matters critically: sparse-subset selection fails where low-rank full-layer decompositions succeed.

Understanding which parameters matter in large models has also been studied through pruning and sparsity. \citet{jaiswal2023emergence} identified ``essential sparsity" in pretrained transformers: a sharp threshold beyond which one-shot magnitude pruning causes rapid performance collapse. They showed that 30--50\% of weights can be removed without retraining, and that simple magnitude pruning matches expensive Lottery Ticket methods within this sparsity range. This parallels our observation that LoRA's implicit regularization provides natural protection for important weights without explicit constraints. While these works focus on identifying or removing unimportant parameters, our work asks the converse question: given that we know which parameters are most critical (Super Weights), can we fine-tune \textit{only} those parameters? Our finding is that this always fails, even with neighborhoods of up to 36K parameters, establishing that parameter importance and parameter trainability in isolation are different properties.

\section{Methods}

\subsection{Experimental setup}

We conduct all training experiments (direct SW training, neighborhood training, LoRA variants, seed ablation) on two models: OLMo-1B \citep{groeneveld2024olmo} (1.28B parameters, 16 transformer layers) and OLMo-7B (7B parameters, 32 transformer layers). For the pruning replication, we additionally test Phi-3-mini (3.8B), Meta-Llama-3-8B, Llama-3.1-8B-Instruct, Llama-3.2-3B, Llama-3.2-1B, Qwen2.5-1.5B-Instruct, and Gemma-2-9B-it to assess generality of the Super Weight pruning phenomenon across architectures (10 models total). Our primary evaluation dataset is ARC-Easy \citep{clark2018arc} (2,251 train, 2,376 test), with Winogrande \citep{sakaguchi2019winogrande} (9,248 train, 1,267 test) for additional validation. Pretrained baseline accuracies are reported in Table~\ref{tab:baseline}. We identify Super Weights by magnitude ranking, scanning \emph{all} weight matrices (\texttt{q\_proj}, \texttt{k\_proj}, \texttt{v\_proj}, \texttt{o\_proj}, \texttt{gate\_proj}, \texttt{up\_proj}, \texttt{down\_proj}) across all layers. Empirically, the highest-magnitude parameters concentrate overwhelmingly in \texttt{down\_proj}: 80\% of the top-10 and 52\% of the top-100 highest-magnitude parameters across the entire model are in \texttt{down\_proj}. This is not a design choice but an empirical finding, consistent with \citet{yu2025SuperWeight}, who trace activation spikes directly to \texttt{down\_proj} magnitude outliers. We cache the top 10,000 positions per model.

All training experiments use AdamW with learning rate $10^{-4}$, 3 epochs, and an effective batch size of 16 (batch size 4, gradient accumulation 4 for 1B; batch size 1, gradient accumulation 16 for 7B). For LoRA experiments, we use rank $r=8$, $\alpha=16$, targeting attention layers (q\_proj, k\_proj, v\_proj, o\_proj), yielding 2.1M trainable parameters (0.16\% of OLMo-1B). All methods are evaluated on held-out test sets using exact match accuracy, selecting the answer with lowest perplexity for multiple-choice questions.

\subsection{Experiments}

We design six experiments to test whether Super Weights can be trained in isolation, whether local context around them is sufficient, and whether full-layer methods like LoRA require any special treatment of Super Weight positions.

\begin{enumerate}

\item \textbf{Super Weight consistency:} \citet{yu2025SuperWeight} identified Super Weights from a single forward pass, leaving open the question of whether the same positions are critical across different inputs. We run 1,000 random WikiText-2 \citep{merity2016wikitext} samples through the model, identify weights that produce the largest activation spikes in down\_proj layers for each sample, and measure how consistently the same positions appear. Each sample requires only a single gradient-free forward pass, so the cost is linear in the number of samples and completes in minutes on one GPU.

\item \textbf{Pruning replication:} We replicate the pruning experiment of \citet{yu2025SuperWeight} across six models. For OLMo-1B/7B we use our magnitude-based identification; for Phi-3-mini and Meta-Llama-3-8B we use coordinates from \citet{yu2025SuperWeight}; for Qwen2.5-1.5B and Llama-3.2-1B we test magnitude-based identification to probe generalisation.

\item \textbf{Direct Super Weight training:} We train only the top-$k$ Super Weights by magnitude while freezing all other parameters, for $k \in \{100, 1000, 4096, 8192\}$.

\item \textbf{Neighborhood training:} If isolated Super Weights fail because they lack the local context needed to coordinate with adjacent parameters, then unfreezing their immediate neighbors should help. We test this by extending direct training to include all parameters within radius 1 of each Super Weight, meaning that for a Super Weight at matrix position $(i,j)$ we also train the eight entries at $(i\pm1, j\pm1)$, forming a $3\times3$ patch of roughly nine parameters per Super Weight. For $k=100$ this yields approximately 900 trainable parameters; for $k=1000$, approximately 9,000; and for $k=4096$, approximately 36,864.

\item \textbf{LoRA-dproj-SW-freeze:} LoRA targets attention projections (\texttt{q\_proj}, \texttt{k\_proj}, \texttt{v\_proj}, \texttt{o\_proj}), while Super Weights reside in \texttt{down\_proj} (MLP), which are entirely different weight matrices with no overlap. LoRA never directly modifies Super Weight values; instead it reshapes the representations fed into \texttt{down\_proj} via the residual stream. This raises a natural question: are the attention representations that flow \emph{into} Super Weight-containing layers sensitive to which attention positions get updated? To test this, we implement a variant that applies a scale factor $s$ to attention update positions whose indices correspond to Super Weight coordinates in \texttt{down\_proj} (i.e., position $(i,j)$ in the attention LoRA update is scaled when $(i,j)$ matches a Super Weight location in \texttt{down\_proj}):
\begin{equation}
\Delta W = (B \odot M_s) \times A \times \frac{\alpha}{r}
\end{equation}
where $M_s[i,j] = s$ if $(i,j)$ matches a Super Weight coordinate and 1.0 otherwise. We test $s \in \{0.0, 0.1, 0.2, 0.5, 0.8, 1.0\}$, where $s=0.0$ fully restricts those positions and $s=1.0$ recovers vanilla LoRA. To validate robustness, we run a 10-seed ablation (seeds 42--51) comparing $s=0.0$ and $s=1.0$.

\item \textbf{LoRA-$\Delta W$-SW-freeze:} The previous experiments establish that LoRA succeeds without touching base model Super Weights. A natural follow-up question is: does LoRA itself develop analogous critical positions? We identify ``LoRA-$\Delta W$ Super Weights'' by loading a trained vanilla LoRA checkpoint, computing $\Delta W = B A \cdot \alpha/r$ for each attention layer, and finding the top-1,000 highest-magnitude positions in $\Delta W$ across all layers. We then further train the LoRA from with those positions frozen (gradient hooks on the corresponding rows of \texttt{lora\_B} and columns of \texttt{lora\_A}), and compare to vanilla LoRA across 3 seeds on both OLMo-1B and OLMo-7B.

\end{enumerate}

\subsection{Weight change analysis}
For each trained model, we measure how much Super Weight positions changed during training. For LoRA, we reconstruct the full update as $\Delta W = BA \cdot \alpha/r$ and extract values at Super Weight positions. For direct training, we compare checkpoints before and after training. We report absolute change, relative change (\%), and coefficient of variation across Super Weight positions.

\section{Results}

We present results in two stages. First, we validate the Super Weight phenomenon across nine diverse models spanning multiple architectures and scales (Section~\ref{sec:pruning}). This broad validation uses our magnitude-based identification for newer models, and the original activation-spike coordinates from \citet{yu2025SuperWeight} for models they tested. Second, we conduct a comprehensive investigation of selective training and LoRA behaviour exclusively on OLMo-1B and OLMo-7B, where Super Weight criticality is confirmed and we have full experimental control. All OLMo results are presented side-by-side for both scales. For ARC-Easy (4-choice questions), random guessing corresponds to approximately 25\% accuracy.

\subsection{Baseline performance}

Pretrained baselines across all 10 models are reported in Table~\ref{tab:baseline} (Appendix~\ref{app:baselines}). OLMo-1B and OLMo-7B are used throughout all experiments; the remaining 8 models are evaluated only in the pruning replication (Section~\ref{sec:pruning}).

\subsection{Super Weight consistency}

\begin{table}[h!]
\centering
\caption{Super Weight consistency across 1,000 random WikiText-2 samples (OLMo-1B), measured by how often the same weight positions produce the largest activation spikes in down\_proj layers. Each ``Count'' row is cumulative (positions appearing in at least the stated fraction of samples).}
\label{tab:consistency}
\small
\begin{tabular}{@{}lrr@{}}
\toprule
\textbf{Consistency Level} & \textbf{Count} & \textbf{Sample Coverage} \\
\midrule
100\% consistent & 9 weights & 1000/1000 samples \\
$\geq$50\% consistent & 13 weights & 500--633/1000 samples \\
$\geq$20\% consistent & 24 weights & 200--498/1000 samples \\
Total unique weights & 58 weights & --- \\
\bottomrule
\end{tabular}
\end{table}

\citet{yu2025SuperWeight} identified Super Weights from a single forward pass. Table~\ref{tab:consistency} shows that the same positions are critical across diverse inputs: 9 weight positions produce the largest activation spikes in all 1,000 samples, and 13 positions appear in at least half. The set stabilizes quickly, reaching its final composition by roughly $n=100$, and there is a sharp gap between the 9 perfectly consistent positions and the next tier at 63.3\%, indicating a well-defined structural core. The 9 perfectly consistent positions span layers 1--12 of the 16-layer model. This confirms that Super Weights are structural properties of the pretrained model rather than artifacts of particular inputs. Having established their consistency, we next test whether they are truly critical by replicating the pruning experiment across multiple model families.

\subsection{Pruning replication}
\label{sec:pruning}

Zeroing the Super Weight coordinates from \citet{yu2025SuperWeight} collapses WikiText-2 perplexity by orders of magnitude in all five models tested with paper coordinates (Table~\ref{tab:pruning}, Appendix~\ref{app:pruning}). For OLMo-1B, perplexity increases 3,663$\times$ and ARC-Easy accuracy drops to near random-guessing (27.0\%). For OLMo-7B, perplexity collapses even more severely (4,400$\times$, 9.59\,$\to$\,42,024), while ARC-Easy accuracy drops more modestly (73.3\%\,$\to$\,60.5\%); lowest-perplexity selection over four fixed choices can survive substantial generation degradation, so the perplexity metric reflects the collapse more directly than multiple-choice accuracy at 7B scale. We confirm the same perplexity collapse in Phi-3-mini, Mistral-7B, and Meta-Llama-3-8B. Applying magnitude-based identification to five additional models produced no measurable effect, revealing that high magnitude alone is insufficient. The phenomenon depends on activation outlier patterns specific to each model family. Full results and discussion are in Appendix~\ref{app:pruning}.

All subsequent experiments focus exclusively on OLMo-1B and OLMo-7B, where Super Weight criticality is confirmed.

\subsection{Direct Super Weight training}

Training only Super Weights drops accuracy to random-guessing levels on both models regardless of how many positions are selected (Table~\ref{tab:direct_training}). Increasing from 100 to 8,192 trained positions ($81\times$ more parameters) provides no improvement. Training loss decreases during optimization, but validation perplexity explodes, indicating that isolated positions memorize training data without generalizing. We next ask whether adding local context around each Super Weight resolves this failure.

\begin{table}[h!]
\centering
\caption{ARC-Easy accuracy when training only the top-$k$ Super Weights by magnitude, all other parameters frozen. Baselines: OLMo-1B 60.65\%, OLMo-7B 73.3\%.}
\label{tab:direct_training}
\small
\begin{tabular}{@{}lccc@{}}
\toprule
\textbf{Method} & \textbf{Params} & \textbf{OLMo-1B} & \textbf{OLMo-7B} \\
\midrule
Top 100 SW  & 100   & 25.30\% & 26.3\% \\
Top 1,000 SW & 1,000 & ${\sim}$25\% & 26.1\% \\
Top 4,096 SW & 4,096 & 26.30\% & 26.1\% \\
Top 8,192 SW & 8,192 & ${\sim}$25\% & ${\sim}$25\% \\
\bottomrule
\end{tabular}
\end{table}

\begin{figure}[t]
\centering
\includegraphics[width=\textwidth]{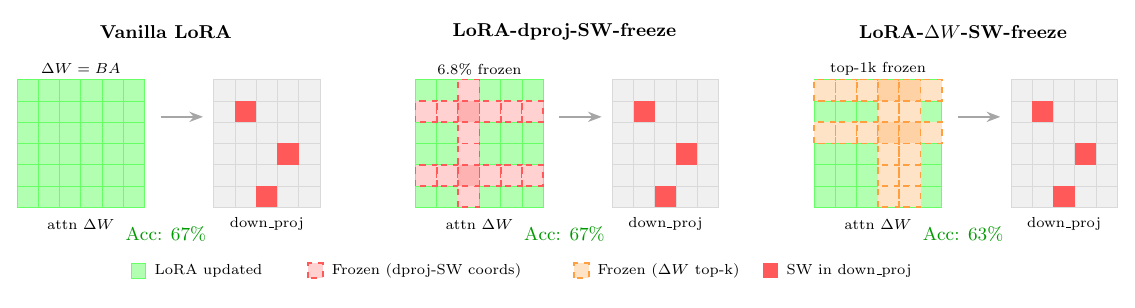}
\caption{The three LoRA variants. All operate on attention projections (\texttt{q/k/v/o\_proj}) and leave \texttt{down\_proj} (where Super Weights reside) completely unchanged. \textbf{Vanilla LoRA:} all attention positions updated via $\Delta W = BA$. \textbf{LoRA-dproj-SW-freeze:} rows and columns at indices corresponding to \texttt{down\_proj} Super Weight coordinates are frozen (6.8\% of LoRA params for OLMo-1B). \textbf{LoRA-$\Delta W$-SW-freeze:} rows and columns at the top-1,000 highest-magnitude positions in the trained $\Delta W$ are frozen. All three variants achieve $\geq$63\% accuracy, which is statistically indistinguishable.}
\label{fig:lora_variants}
\end{figure}

\subsection{Neighborhood training}

\begin{table}[t]
\centering
\caption{ARC-Easy accuracy when training Super Weights plus all parameters within radius 1 in the weight matrix.}
\label{tab:neighborhood}
\small
\begin{tabular}{@{}lccc@{}}
\toprule
\textbf{Method} & \textbf{Params} & \textbf{OLMo-1B} & \textbf{OLMo-7B} \\
\midrule
Neighborhood ($k$=100)   & ${\sim}$900    & 24.96\% & 26.3\% \\
Neighborhood ($k$=1,000) & ${\sim}$9,000  & 25.59\% & 26.1\% \\
Neighborhood ($k$=4,096) & ${\sim}$36,864 & 25.59\% & 26.1\% \\
\bottomrule
\end{tabular}
\end{table}

Including local neighborhoods provides no benefit on either model (Table~\ref{tab:neighborhood}). Even with 36,864 trainable parameters, accuracy remains at random-guessing levels. Training becomes unstable on OLMo-1B, where loss increases from 2.00 to 201.92 for the $k$=4,096 variant. Having shown that training at Super Weight coordinates and their neighborhoods fails, we turn to full-layer LoRA methods and examine whether Super Weights require any special treatment within them.

\begin{figure}[t]
\centering
\includegraphics[width=0.95\textwidth]{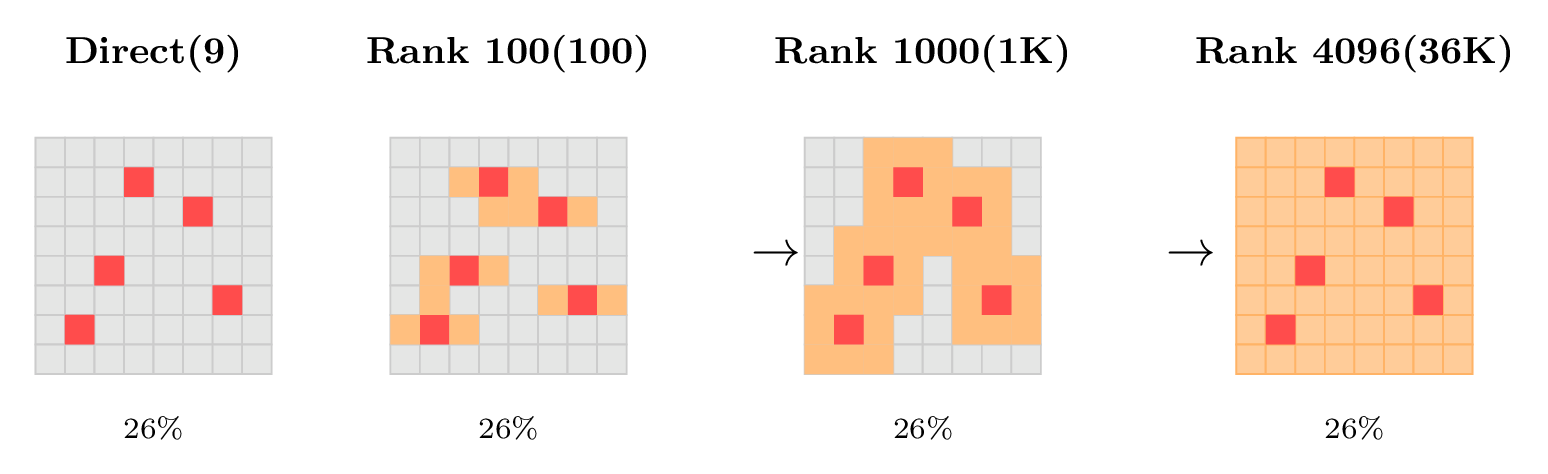}
\caption{Expanding from isolated Super Weights to neighborhoods of 100, 1,000, or 36,864 parameters (orange) does not improve over random guessing. Here, rank corresponds to neighborhood size.}
\label{fig:neighborhood}
\end{figure}

\subsection{LoRA-dproj-SW-freeze}

Majority of Super Weights happen to reside in \texttt{down\_proj} while LoRA targets attention projections, which are entirely different weight matrices. LoRA reshapes attention outputs that feed directly into \texttt{down\_proj}. We scale down LoRA's update at attention positions whose indices match Super Weight coordinates (see Appendix~\ref{app:lora_geometry} for the full geometric explanation). At $s=0.0$, this affects 6.8\% of LoRA parameters for OLMo-1B and 0.5\% for OLMo-7B.

\begin{table}[t]
\centering
\caption{LoRA-dproj-SW-freeze: scale factors $s$ applied to attention update positions whose indices match \texttt{down\_proj} Super Weight coordinates. $s$=0.0 (LoRA-dproj-SW-freeze) fully restricts those positions; $s$=1.0 is vanilla LoRA.}
\label{tab:superlora}
\small
\begin{tabular}{@{}llrr@{}}
\toprule
\textbf{Scale} & \textbf{Description} & \textbf{OLMo-1B} & \textbf{OLMo-7B} \\
\midrule
1.0 & Vanilla LoRA          & 66.88\% & 77.3\% \\
0.8 & 80\% of matched positions & 66.54\% & 76.9\% \\
0.5 & 50\% of matched positions & 66.50\% & 78.0\% \\
0.2 & 20\% of matched positions & 66.46\% & 76.6\% \\
0.1 & 10\% of matched positions & 66.50\% & 76.4\% \\
0.0 & LoRA-dproj-SW-freeze      & 66.50\% & 76.7\% \\
\bottomrule
\end{tabular}
\end{table}

All LoRA variants improve well above baseline on both models regardless of how aggressively the matched attention positions are restricted (Table~\ref{tab:superlora}). On OLMo-1B, all variants achieve 66.46--66.88\%; on OLMo-7B, 76.4--78.0\%. The spread is less than 1 percentage point in both cases. Restricting even 100\% of the matched positions ($s=0.0$), which freezes 6.8\% of LoRA parameters for OLMo-1B and 0.5\% for OLMo-7B, does not degrade performance, indicating that LoRA's low-rank structure compensates via the remaining 93--99.5\% of unrestricted parameters. We next ask whether LoRA's own highest-magnitude update positions are similarly non-critical.

\subsection{LoRA-$\Delta W$-SW-freeze}

\begin{table}[t]
\centering
\caption{LoRA-$\Delta W$-SW-freeze: accuracy when the top-1,000 highest-magnitude positions in the trained LoRA update $\Delta W$ are frozen in a fresh training run. 3 seeds each.}
\label{tab:lora_sw_freeze}
\small
\begin{tabular}{@{}llrr@{}}
\toprule
\textbf{Method} & \textbf{Frozen} & \textbf{OLMo-1B} & \textbf{OLMo-7B} \\
\midrule
Vanilla LoRA               & none     & 62.72\% $\pm$ 0.62\% & 65.07\% $\pm$ 1.46\% \\
LoRA-$\Delta W$-SW-freeze & 1,000 positions & 62.77\% $\pm$ 0.62\% & 64.09\% $\pm$ 3.38\% \\
\bottomrule
\end{tabular}
\end{table}

We identify the top-1,000 highest-magnitude positions in the trained LoRA update $\Delta W$ and ask whether freezing them during a fresh training run degrades performance. For OLMo-1B, vanilla LoRA achieves 62.72\% $\pm$ 0.62\% across 3 seeds; LoRA-$\Delta W$-SW-freeze achieves 62.77\% $\pm$ 0.62\%, which is statistically identical. Base model Super Weights cause severe degradation when removed (perplexity increasing by orders of magnitude), but LoRA's highest-magnitude update positions are not individually critical. LoRA's solution is distributed across its parameter space with no single bottleneck positions.

\subsection{Summary of results}

\begin{table}[t]
\centering
\caption{ARC-Easy accuracy across all methods on both models. Selective methods train sparse subsets of coordinates; full-layer methods update every position through low-rank structure. The two \texttt{down\_proj} controls (OLMo-1B) hold the module fixed and vary only the update structure. $^\dagger$~7B pruning accuracy drop is less dramatic than 1B due to larger model capacity, but perplexity collapsed 4,400$\times$ (baseline 9.59~$\to$~42,024), confirming Super Weight criticality (Table~\ref{tab:pruning}). $^*$~Mean of 3 seeds; see Table~\ref{tab:lora_sw_freeze}. $^\ddagger$~Single seed (seed 42), OLMo-1B only.}
\label{tab:comprehensive}
\small
\begin{tabular}{@{}lrrrr@{}}
\toprule
\textbf{Method} & \textbf{Params} & \textbf{Update Type} & \textbf{OLMo-1B} & \textbf{OLMo-7B} \\
\midrule
\multicolumn{5}{l}{\textit{Training at Super Weight coordinates}} \\
Pruning (2 SW)              & 0      & Remove       & 26.98\% & 60.5\%$^\dagger$ \\
Direct ($k$=100)            & 100    & Sparse       & 25.30\% & 26.3\% \\
Direct ($k$=4,096)          & 4,096  & Sparse       & 26.30\% & 26.1\% \\
Neighborhood ($k$=100)      & 900    & Sparse       & 24.96\% & 26.3\% \\
Neighborhood ($k$=4,096)    & 36,864 & Sparse       & 25.59\% & 26.1\% \\
\midrule
\multicolumn{5}{l}{\textit{Controls on \texttt{down\_proj} (same module)}} \\
Random sparse ($k$=4,096)   & 4,096  & Sparse       & 64.18\%$^\ddagger$ & --- \\
LoRA on \texttt{down\_proj} & 1.31M  & Full layer   & 68.77\%$^\ddagger$ & --- \\
\midrule
\multicolumn{5}{l}{\textit{Full-Layer LoRA}} \\
Vanilla LoRA              & 2.1M & Full layer & 66.88\% & 77.3\% \\
LoRA-dproj-SW-freeze      & 2.1M & Full layer & 66.50\% & 76.7\% \\
LoRA-$\Delta W$-SW-freeze (top-1k) & 2.1M & Full layer & 62.77\%$^*$ & 64.09\% \\
\bottomrule
\end{tabular}
\end{table}

Every method that trains at Super Weight coordinates falls to random-guessing levels regardless of parameter count (Table~\ref{tab:comprehensive}). The two \texttt{down\_proj} controls localize the cause. With the module held fixed, training an equal number of random coordinates ($k$=4,096) reaches 64.18\%, and a rank-8 low-rank update reaches 68.77\%. Both improve over the 60.65\% baseline, so neither sparsity nor the choice of \texttt{down\_proj} explains the collapse. What fails is training directed at the Super Weight coordinates themselves. All full-layer LoRA variants improve well above baseline, and the pattern is consistent across the 1B and 7B models. (LoRA-$\Delta W$-SW-freeze uses 3 seeds; see Table~\ref{tab:lora_sw_freeze}.)

\subsection{Super Weight stability and weight change analysis}

Super Weights remain stable under LoRA training: all maintain their top-1\% percentile rank, all remain local maxima, and their magnitude changes by only $-$0.03\% on average (Table~\ref{tab:sw_stability}, Appendix~\ref{app:stability}). Direct training produces changes 55$\times$ larger than LoRA (8.19\% vs 0.15\% average), yet LoRA succeeds while direct training does not (Table~\ref{tab:weight_changes}, Appendix~\ref{app:stability}). The size of the update alone does not determine outcome; coordinated full-layer structure does.

\subsection{Seed ablation}

A 10-seed ablation (seeds 42--51) confirms that vanilla LoRA and LoRA-dproj-SW-freeze ($s$=0.0) are statistically indistinguishable: identical mean accuracy (62.90\% $\pm$ 0.45\%) on OLMo-1B with 8/10 seeds producing bit-for-bit identical predictions ($p > 0.05$), and all 10 seeds identical on OLMo-7B ($p = 1.0$). Full results are in Table~\ref{tab:seed_ablation} and Figure~\ref{fig:seed_ablation} (Appendix~\ref{app:seed_ablation}).

\section{Discussion}

Table~\ref{tab:comprehensive} summarizes our main results. Training Super Weights and their neighborhoods in isolation fails at random-guessing levels, while all full-layer LoRA variants succeed equivalently. Two controls localize the cause, and we discuss the findings below across two main observations.

\paragraph{Observation 1: The failure is specific to Super Weight coordinates, not to sparsity or module.} Training Super Weights and their neighborhoods in isolation fails at random-guessing levels regardless of parameter count. Two controls on the \texttt{down\_proj} layers show that neither sparsity nor the choice of module explains this. Training an equal number of randomly chosen \texttt{down\_proj} positions, with Super Weight coordinates excluded, reaches 64.18\% on OLMo-1B ARC-Easy, above the 60.65\% baseline, so sparse training in the same layers succeeds when it avoids Super Weight coordinates. Applying a rank-8 low-rank update to \texttt{down\_proj} reaches 68.77\%, so the module supports effective adaptation. What collapses is training directed at the Super Weight coordinates themselves. When only those positions receive gradients, each trained position changes independently while its surrounding row and column stay frozen, which breaks the relational structure the pretrained model learned. Under direct training, 35.5\% of Super Weight updates exceed 10\% of their original value, with some changing by as much as 30.89\%. These large, uncoordinated changes at isolated outlier positions prevent generalization, an effect the random-position control does not exhibit.

Recent work by \citet{morris2026tinylora} shows that full-layer low-rank updates can succeed with as few as 13 trainable parameters. The distinction from our sparse training is that TinyLoRA's frozen SVD components maintain layer-wide coordination, while our sparse training has no coordination mechanism (see Appendix~\ref{app:obs4} for the full comparison).

Expanding to local neighborhoods does not resolve this. A fixed neighborhood around each Super Weight captures at most 36,864 parameters, but the computations that Super Weights participate in span entire layers and propagate across multiple layers through residual connections. Theoretical analysis (Appendix~\ref{app:theory_intrinsic}) shows that sparse coordinate-aligned updates capture only a $k/N$ fraction of the fine-tuning subspace, while LoRA's rank-$r$ manifold provides $256\times$ more capacity.

\paragraph{Observation 2: LoRA produces coordinated updates across entire layers.} LoRA succeeds because its low-rank decomposition $\Delta W = BA \cdot \alpha/r$ produces an update at every position $(i,j)$ in the weight matrix:
\begin{equation}
\Delta W_{ij} = \sum_{k=1}^{r} B_{ik} A_{kj} \times \frac{\alpha}{r}
\end{equation}
While only the rank-$r$ matrices $B$ and $A$ are trainable (2.1M parameters for $r=8$), they generate updates for all 4.2M positions. This contrasts with sparse-subset training, where 36K positions are updated and the remaining 4.16M (99.1\%) stay frozen. Under LoRA, no position is left unchanged.

The low-rank constraint has several useful properties in this context. Updates to any position $(i,j)$ are coupled to all other positions sharing row $i$ or column $j$ through the shared bottleneck, so gradients cannot isolate single positions. The rank constraint also limits updates to an $r$-dimensional subspace (0.39\% of full rank for $r=8$, $n=2048$), producing small, structured changes.

Super Weights predominantly reside in \texttt{down\_proj} (MLP output projection), while LoRA targets attention projections (\texttt{q\_proj}, \texttt{k\_proj}, \texttt{v\_proj}, \texttt{o\_proj}). LoRA therefore produces zero direct change to Super Weight values, but is connected through the residual stream: attention outputs feed into the MLP block, and \texttt{down\_proj} processes those representations. LoRA learns to route representations through the attention layers so that the fixed, unchanged Super Weights in \texttt{down\_proj} produce task-appropriate outputs. Scaling down the attention updates at positions corresponding to Super Weight coordinates, which restricts 24--100\% of the affected positions depending on coordinate bounds, has no measurable effect: the 10-seed ablation shows $p > 0.05$ with 80\% of seeds producing identical predictions.

Additionally, top Super Weights within $\Delta W$ when frozen also achieve performance on par with vanilla LoRA. Super Weights act as fixed high-gain amplifiers, and LoRA adjusts what gets fed into them. Direct training of Super Weights fails because modifying these amplifiers in isolation, without coordinating the representations that flow through them, breaks the structure the pretrained model depends on. Theoretical analysis (Appendix~\ref{app:theory_optimizer}) confirms that adaptive optimizers implicitly suppress updates at Super Weight positions by $1/M$ due to $M^2$-fold larger curvature, making explicit freezing redundant.

LoRA's full-layer low-rank structure is robust to position-level restrictions. The representations that flow into Super Weight layers are shaped by the overall low-rank update, not by any particular set of positions within it. Constraining a subset of attention positions leaves the downstream Super Weights equally well-served.

We see that all LoRA variants improve well above baseline regardless of how Super Weights are treated within the full-layer update. This suggests that the structure of the update (full-layer, low-rank) matters, while position-level treatment within that structure does not. Super Weights must remain at their positions (pruning is destructive), and they must be updated as part of a full layer (isolation fails), but they do not require special handling within full-layer methods.

\subsection{Limitations}

We evaluate primarily on ARC-Easy, with Winogrande for validation (Appendix~\ref{app:winogrande}). On Winogrande no method improves over the strong pretrained baseline, but the relative pattern holds: training at Super Weight coordinates collapses to chance while LoRA stays close to baseline. The failure of training at Super Weight coordinates (dropping to random-guessing levels) is a large effect unlikely to be task-specific, but validating on harder benchmarks (MMLU, GSM8K) and with PEFT methods beyond LoRA (adapters, prompt tuning) would strengthen generality.

Our random-position control uses a single seed (seed 42) at $k=4{,}096$ on OLMo-1B. Repeating it across more seeds, more parameter budgets, and both OLMo scales would further confirm that the collapse is specific to Super Weight coordinates, and we leave this broader sweep to future work.

Our training experiments use the activation-spike Super Weight coordinates of \citet{yu2025SuperWeight}, whose criticality we confirm through pruning replication (Section~\ref{sec:pruning}). We use magnitude ranking only for consistency validation and as a cross-model identification baseline, where we report that it does not generalise to all architectures (Appendix~\ref{app:pruning}). Our consistency validation uses WikiText-2 only; cross-dataset validation would strengthen claims about structural properties. Extending the training experiments to non-OLMo families using activation-spike identification is an important direction.

\section{Conclusion}

Across all experiments, training at Super Weight coordinates and their neighborhoods fails at random-guessing levels regardless of parameter count (100 to 36,864), while LoRA succeeds by updating every position in the weight matrix through low-rank decomposition. The failure is specific to those coordinates rather than to sparse training in general: an equal-size random-position control on the same \texttt{down\_proj} layers improves over the baseline, and a low-rank update to \texttt{down\_proj} succeeds as well. Position-aware constraints within LoRA, as well as direct freezing of LoRA Super Weights, have no measurable effect, confirming that full-layer coordination drives successful adaptation. These results suggest that PEFT methods should derive parameter efficiency from structured decompositions over entire layers. Theoretical analysis (Appendices~\ref{app:theory_intrinsic}--\ref{app:theory_optimizer}) supports these findings. Sparse coordinate-aligned updates capture only a vanishing fraction of the fine-tuning subspace, so they are suboptimal in general, and the additional curvature at Super Weight positions turns that suboptimality into catastrophic collapse when training targets those coordinates directly. We hope this work encourages further investigation into the role of layer-wide update structure in fine-tuning, particularly across larger models, diverse tasks, and alternative PEFT approaches.

\bibliography{colm2026_conference}
\bibliographystyle{colm2026_conference}

\newpage
\appendix

\section{Baseline Performance}
\label{app:baselines}

\begin{table}[h]
\centering
\caption{Pretrained baseline performance (zero-shot). Models above the rule are used for all experiments; models below are tested for pruning replication only. All models evaluated on ARC-Easy and Winogrande; WikiText-2 PPL evaluated in float16 except where noted.}
\label{tab:baseline}
\small
\begin{tabular}{@{}lccc@{}}
\toprule
\textbf{Model} & \textbf{ARC-Easy} & \textbf{Winogrande} & \textbf{WikiText-2 PPL} \\
\midrule
OLMo-1B (1.28B)         & 60.6\% & 61.6\% & 13.09 \\
OLMo-7B (7B)            & 73.3\% & 67.5\% & 9.59  \\
\midrule
\multicolumn{4}{l}{\textit{Pruning replication only (10 models total)}} \\
Phi-3-mini (3.8B)       & 81.9\% & 73.5\% & 9.48  \\
Mistral-7B-v0.1         & 80.3\% & 75.3\% & 8.08  \\
Meta-Llama-3-8B         & 80.8\% & 73.9\% & 7.44  \\
Llama-3.1-8B-Instr.     & 82.3\% & 73.3\% & 8.92$^\dagger$ \\
Llama-3.2-3B            & 74.5\% & 69.4\% & 9.51  \\
Llama-3.2-1B            & 66.3\% & 60.9\% & 11.96 \\
Qwen2.5-1.5B-Instr.     & 76.8\% & 62.7\% & 12.21 \\
Gemma-2-9B-it           & 85.9\% & 75.9\% & 24.08$^\ddagger$ \\
\bottomrule
\end{tabular}
\begin{minipage}{\linewidth}
\vspace{4pt}
\footnotesize
$^\dagger$~LLama-3.1-8B-Instr.~PPL measured in 8-bit (INT8); ARC-Easy and Winogrande unaffected (float16).
\newline $^\ddagger$~Gemma PPL also uses \texttt{max\_length=512}. See Appendix~\ref{app:pruning} for details.
\end{minipage}
\end{table}

\section{Pruning Replication}
\label{app:pruning}

\begin{table}[h]
\centering
\caption{Effect of zeroing Super Weight coordinates. Each metric shown as baseline\,$\to$\,pruned. \textbf{Top block}: coordinates from \citet{yu2025SuperWeight}, catastrophic degradation confirmed. \textbf{Bottom block}: top-magnitude identification, no effect observed. $^\dagger$~Winogrande near-random (49.3\%) confirms total collapse. $^\ddagger$~PPL diverged to $\infty$.}
\label{tab:pruning}
\small
\begin{tabular}{@{}lrccc@{}}
\toprule
\textbf{Model} & \textbf{SWs} & \textbf{WikiText-2 PPL} & \textbf{ARC-Easy} & \textbf{Winogrande} \\
\midrule
\multicolumn{5}{l}{\textit{Paper coordinates \citep{yu2025SuperWeight}, catastrophic degradation confirmed}} \\
OLMo-1B          & 2 & 13.09\,$\to$\,47{,}951 & 60.6\%\,$\to$\,27.0\% & 61.6\%\,$\to$\,49.3\%$^\dagger$ \\
OLMo-7B          & 2 & 9.59\,$\to$\,42{,}024  & 73.3\%\,$\to$\,60.5\% & 67.5\%\,$\to$\,62.3\% \\
Phi-3-mini       & 6 & 9.48\,$\to$\,3{,}543   & 81.9\%\,$\to$\,34.3\% & 73.5\%\,$\to$\,51.9\% \\
Mistral-7B-v0.1  & 1 & 8.08\,$\to$\,$\infty^\ddagger$ & 80.3\%\,$\to$\,25.6\% & 75.3\%\,$\to$\,49.5\% \\
Meta-Llama-3-8B  & 3 & 7.44\,$\to$\,$\infty^\ddagger$ & 80.8\%\,$\to$\,25.1\% & 73.9\%\,$\to$\,48.2\% \\
\midrule
\multicolumn{5}{l}{\textit{Top-magnitude identification, no effect observed}} \\
Llama-3.1-8B-Instr. & 2 & 8.92\,$\to$\,8.92$^*$   & 82.3\%\,$\to$\,82.2\% & 73.3\%\,$\to$\,73.6\% \\
Llama-3.2-3B        & 2 & 9.51\,$\to$\,9.51         & 74.5\%\,$\to$\,74.9\% & 69.4\%\,$\to$\,69.5\% \\
Llama-3.2-1B        & 2 & 11.96\,$\to$\,11.96       & 66.3\%\,$\to$\,66.3\% & 60.9\%\,$\to$\,60.9\% \\
Qwen2.5-1.5B-Instr. & 2 & 12.21\,$\to$\,12.21       & 76.8\%\,$\to$\,76.8\% & 62.7\%\,$\to$\,63.0\% \\
Gemma-2-9B-it       & 2 & 24.08\,$\to$\,24.04$^{**}$  & 85.9\%\,$\to$\,85.8\% & 75.9\%\,$\to$\,76.0\% \\
\midrule
\multicolumn{5}{p{0.92\linewidth}}{\footnotesize
$^*$~PPL measured in 8-bit (INT8) quantization; Llama-3.1-8B requires 17\,GB in float16, leaving insufficient headroom for wikitext rolling evaluation on a 22\,GB GPU. $^{**}$~Gemma-2-9B additionally required \texttt{max\_length=512}; PPL value not directly comparable to other rows. ARC-Easy and Winogrande for both models were evaluated in float16 and are fully comparable.
} \\
\bottomrule
\end{tabular}
\end{table}

Zeroing the Super Weight coordinates from \citet{yu2025SuperWeight} causes catastrophic degradation in all five models tested with paper coordinates (Table~\ref{tab:pruning}). For OLMo-1B, perplexity increases 3,663$\times$ (13.09\,$\to$\,47,951) and ARC-Easy accuracy collapses to near random-guessing (60.6\%\,$\to$\,27.0\%). For OLMo-7B the PPL collapse is even more severe (4,400$\times$), though the accuracy drop is smaller, consistent with larger model capacity. We confirm the same phenomenon in Phi-3-mini (PPL 9.48\,$\to$\,3,543; ARC 81.9\%\,$\to$\,34.3\%), Mistral-7B-v0.1 (PPL $\to\infty$; ARC 80.3\%\,$\to$\,25.6\%), and Meta-Llama-3-8B (PPL $\to\infty$; ARC 80.8\%\,$\to$\,25.1\%) using the original paper's coordinates.

Applying magnitude-based identification to five additional models (Llama-3.1-8B-Instruct, Llama-3.2-3B, Llama-3.2-1B, Qwen2.5-1.5B-Instruct, and Gemma-2-9B-it) produced no measurable effect when the top-magnitude \texttt{down\_proj} positions were zeroed. PPL changes are $<$0.1\% in all cases, and ARC-Easy and Winogrande accuracy are statistically identical before and after pruning. This is surprising: Qwen2.5-1.5B's top weight magnitudes (1.20) are \emph{higher} than OLMo-1B's (0.83), and Gemma-2-9B-it achieves the highest baseline accuracy of any tested model (85.9\%), yet pruning its top-magnitude positions is harmless. High magnitude is therefore not sufficient to identify Super Weights. The phenomenon depends on the specific activation outlier patterns in each model family, not weight magnitude alone. This is consistent with \citet{yu2025SuperWeight}'s activation-spike-based identification method, which may capture functionally critical positions that magnitude-based ranking misses. Notably, in one case (Llama-3.2-3B), accuracy marginally \emph{increased} after zeroing the top-magnitude positions (74.49\% $\to$ 74.92\%), suggesting those positions may even carry slight negative contributions under our identification method. These findings suggest that \citet{yu2025SuperWeight}'s original evaluation was limited to a narrow set of architectures (primarily LLaMA-family and OLMo), and that the Super Weight phenomenon does not universally apply to all modern LLMs via magnitude-based identification.

\paragraph{PPL evaluation note.} WikiText-2 perplexity for Llama-3.1-8B-Instruct and Gemma-2-9B-it was evaluated using 8-bit (INT8) quantization, as float16 inference requires 17--19\,GB, leaving insufficient memory for the rolling-context wikitext evaluation on our 22\,GB GPUs. INT8 reduces the model footprint to ${\approx}$9--10\,GB while leaving ARC-Easy and Winogrande unaffected (those were evaluated in float16). Gemma-2-9B-it additionally required \texttt{max\_length=512} (vs.\ the standard 2048) due to its logit soft-capping operation inflating peak memory; its PPL value (24.08) is therefore not directly comparable to other models in Table~\ref{tab:pruning} and is provided for completeness. The key finding, that pruning top-magnitude positions has no effect, holds unambiguously for both models across all three metrics.

All subsequent experiments (direct training, neighborhood training, LoRA variants, seed ablation) focus exclusively on OLMo-1B and OLMo-7B, where Super Weight criticality is confirmed. The multi-model pruning results above validate that the phenomenon extends beyond OLMo for models in the original paper's test set; all further analysis uses OLMo as the controlled testbed. We choose OLMo specifically because: (1) Super Weights are confirmed critical via the original paper's activation-spike identification; (2) OLMo is fully open-source with transparent training, enabling reproducible analysis of weight statistics and gradient behaviour; and (3) we have validated results on both a 1B and 7B variant, allowing scale comparisons within the same model family.

\section{Super Weight Stability and Weight Change Analysis}
\label{app:stability}

\begin{table}[h]
\centering
\caption{Super Weight properties before and after vanilla LoRA training (OLMo-1B).}
\label{tab:sw_stability}
\small
\begin{tabular}{@{}lccc@{}}
\toprule
\textbf{Metric} & \textbf{Before} & \textbf{After} & \textbf{Change} \\
\midrule
Mean SW magnitude & 0.206 & 0.206 & $-$0.03\% \\
SW percentile rank & 100.0\% & 100.0\% & 0\% \\
SW as local maximum & 100\% & 100\% & 0\% \\
SW/neighbor ratio & 58.4$\times$ & 58.6$\times$ & +0.3\% \\
\bottomrule
\end{tabular}
\end{table}

Super Weights remain stable under LoRA training: all maintain their top-1\% percentile rank, all remain local maxima, and their magnitude changes by only $-$0.03\% on average (Table~\ref{tab:sw_stability}).

\begin{table}[h]
\centering
\caption{Weight change statistics at \texttt{down\_proj} Super Weight coordinates across methods (OLMo-1B). For LoRA, changes are measured at the same $(i,j)$ coordinates in the attention $\Delta W$ matrix, not in \texttt{down\_proj} itself (LoRA does not modify \texttt{down\_proj}).}
\label{tab:weight_changes}
\small
\begin{tabular}{@{}lrrrr@{}}
\toprule
\textbf{Method} & \textbf{Mean $\Delta$} & \textbf{Median $\Delta$} & \textbf{Max $\Delta$} & \textbf{CV} \\
\midrule
Pruning        & 100\%  & 100\%  & 100\%   & 0.00 \\
Direct Training & 8.19\% & 7.55\% & 30.89\% & 0.66 \\
Vanilla LoRA   & 0.15\% & 0.10\% & 5.56\%  & 0.94 \\
\bottomrule
\end{tabular}
\end{table}

Direct training produces changes 55$\times$ larger than LoRA (8.19\% vs 0.15\% average), yet LoRA succeeds while direct training does not (Table~\ref{tab:weight_changes}). The size of the update alone does not determine outcome; coordinated full-layer structure does.

\section{Seed Ablation}
\label{app:seed_ablation}

\begin{table}[h]
\centering
\caption{10-seed robustness check: vanilla LoRA vs.\ LoRA-dproj-SW-freeze ($s$=0.0), repeating the extreme case of Table~\ref{tab:superlora} across seeds 42--51 to test whether the single-seed gap reflects genuine sensitivity or seed variance.}
\label{tab:seed_ablation}
\small
\begin{tabular}{@{}lrrrr@{}}
\toprule
\textbf{Method} & \textbf{Mean Acc} & \textbf{Std Dev} & \textbf{Identical Seeds} & \textbf{p-value} \\
\midrule
\multicolumn{5}{l}{\textit{OLMo-1B}} \\
Vanilla LoRA     & 62.90\% & 0.45\% & -- & -- \\
LoRA-dproj-SW-freeze & 62.90\% & 0.45\% & 8/10 (80\%) & $>$0.05 \\
\midrule
\multicolumn{5}{l}{\textit{OLMo-7B}} \\
Vanilla LoRA     & 66.05\% & 1.14\% & -- & -- \\
LoRA-dproj-SW-freeze & 66.05\% & 1.14\% & 10/10 (100\%) & $=1.0$ \\
\bottomrule
\end{tabular}
\end{table}

Table~\ref{tab:seed_ablation} distinguishes two related but separate experiments. Table~\ref{tab:superlora} (LoRA-dproj-SW-freeze) tests a range of scale factors $s$ applied to attention positions whose indices match Super Weight coordinates in \texttt{down\_proj}. It varies \emph{how much} those positions are restricted. Table~\ref{tab:seed_ablation} (seed ablation) asks whether the single-seed gap between $s=1.0$ and $s=0.0$ is real or noise, by repeating both conditions across 10 seeds. The result is clear: vanilla LoRA and LoRA-dproj-SW-freeze achieve identical mean accuracy (62.90\% $\pm$ 0.45\%) across 10 seeds on OLMo-1B, with 8/10 seeds producing bit-for-bit identical predictions and no significant difference ($p > 0.05$). Restricting the attention update at positions corresponding to Super Weight coordinates, which freezes 6.8\% of LoRA parameters for OLMo-1B and just 0.5\% for OLMo-7B, has no measurable effect on performance in either model. For OLMo-7B, all 10 seeds produce bit-for-bit identical predictions between vanilla LoRA and LoRA-dproj-SW-freeze (p$=1.0$), a stronger result than OLMo-1B (8/10 seeds identical, $p > 0.05$). LoRA's full-layer low-rank structure compensates robustly: the MLP's dense transformation means every attention output dimension influences every Super Weight downstream, so the 93--99.5\% of unrestricted LoRA parameters are more than sufficient to route representations appropriately through the fixed Super Weights. This robustness is further confirmed by the LoRA-$\Delta W$-SW-freeze experiment (Table~\ref{tab:lora_sw_freeze}): even freezing the top-1,000 highest-magnitude positions in LoRA's own learned update $\Delta W$ has no measurable effect. Base model Super Weights are individually catastrophic when removed; LoRA's highest-magnitude positions are not. This asymmetry reflects a fundamental difference between pretrained weight structure and learned adaptation structure.

\begin{figure}[h]
\centering
\includegraphics[width=0.85\textwidth]{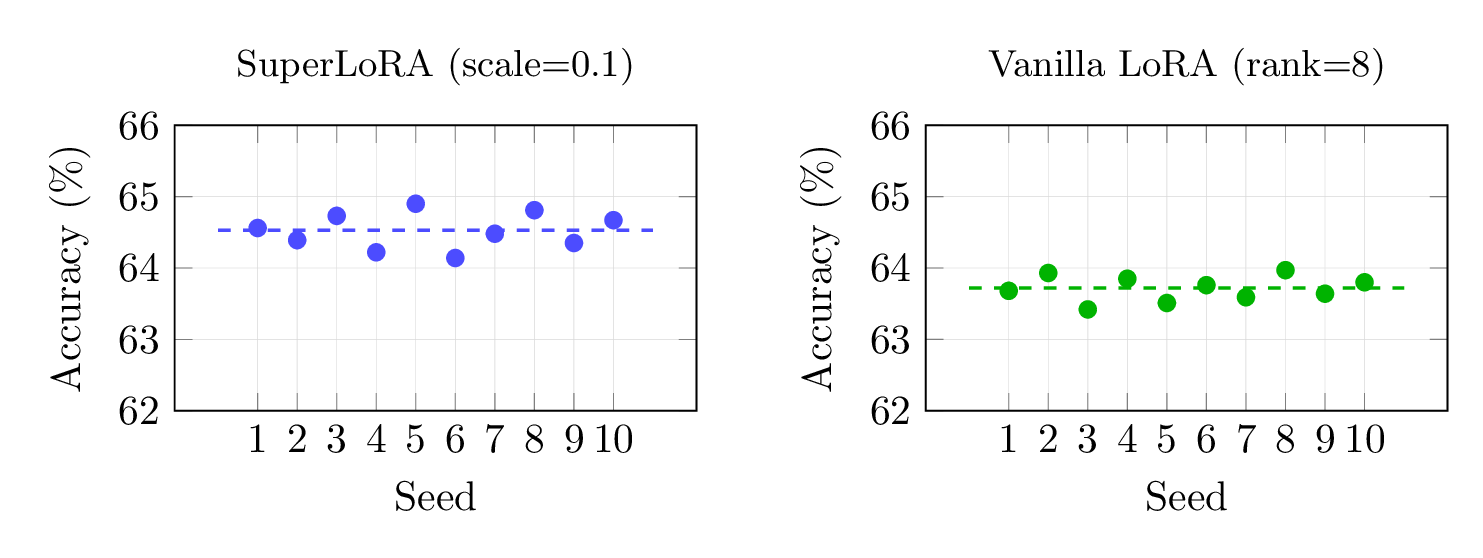}
\caption{Per-seed accuracy for vanilla LoRA and LoRA-dproj-SW-freeze (OLMo-1B, seeds 42--51). 8 of 10 seeds produce identical predictions, confirming the single-seed gap falls within seed variation.}
\label{fig:seed_ablation}
\end{figure}

\section{Winogrande Results}
\label{app:winogrande}

We report OLMo-1B Winogrande results for the training methods in Table~\ref{tab:winogrande}. Winogrande behaves differently from ARC-Easy, and the difference is informative. The pretrained model already scores 61.6\% on this two-choice commonsense task, and no training method improves on it, because there is little task-specific signal to gain from the small training set and the risk of overfitting is high. What the two method families still separate on is the \emph{direction and severity} of the change. Training at Super Weight coordinates and their neighborhoods collapses to roughly 50\%, which is chance for a two-choice task, matching the collapse we see on ARC-Easy. LoRA, by contrast, stays within a few points of the pretrained baseline. The absolute picture is a task where training does not help, but the relative picture is the same as ARC-Easy: training at Super Weight coordinates destroys the capability, while full-layer low-rank updates preserve it.

\begin{table}[h]
\centering
\caption{OLMo-1B Winogrande accuracy (two-choice; chance is 50\%). No method improves over the 61.6\% pretrained baseline, but training at Super Weight coordinates collapses to chance while LoRA stays close to baseline.}
\label{tab:winogrande}
\small
\begin{tabular}{@{}lrr@{}}
\toprule
\textbf{Method} & \textbf{Params} & \textbf{Winogrande} \\
\midrule
Pretrained baseline           & ---   & 61.6\% \\
\midrule
Direct SW ($k$=100)           & 100   & 53.2\% \\
Direct SW ($k$=1,000)         & 1,000 & 50.7\% \\
Neighborhood ($k$=4,096)      & 36,864 & 50.6\% \\
\midrule
LoRA ($r$=1)                  & 4,096 & 59.6\% \\
LoRA ($r$=2)                  & 8,192 & 58.6\% \\
\bottomrule
\end{tabular}
\end{table}

\section{Structured vs.\ Unstructured Freezing}
\label{app:obs4}

Recent work by \citet{morris2026tinylora} shows that full-layer low-rank updates can succeed with as few as 13 trainable parameters using reinforcement learning on math reasoning tasks. This appears to contradict our finding that training at Super Weight coordinates fails with 100--8,192 parameters, but the comparison is informative.

TinyLoRA uses a weight-tying mechanism: $\Delta W = U \cdot (\sum v_i P_i) \cdot V^{\top}$, where $U, \Sigma, V$ are frozen SVD components and $P_i$ are fixed random projections. Only the vector $\mathbf{v}$ (13 parameters) is trainable. The frozen structure propagates these 13 parameters across all positions in the weight matrix, so each position $(i,j)$ receives a coordinated update through the frozen low-rank basis. This is structured freezing: the frozen components maintain relational structure across the layer.

Our sparse Super Weight training performs unstructured freezing: selected positions receive gradient updates while neighboring positions stay exactly at pretrained values, with no mechanism to coordinate between them. The relevant distinction is whether the frozen components maintain layer-wide coordination. Both vanilla LoRA (2.1M trainable parameters) and TinyLoRA (13 trainable parameters) succeed because both produce coordinated updates across the full layer.

\section{LoRA-dproj-SW-freeze Geometry}
\label{app:lora_geometry}

It is important to understand the geometry of this experiment before reading the results. Super Weights reside in \texttt{down\_proj} (shape 2048$\times$8192 in OLMo-1B), while LoRA targets attention projections (\texttt{q/k/v/o\_proj}, shape 2048$\times$2048). These are entirely different weight matrices. LoRA never touches Super Weight values directly.

The connection runs through the residual stream. A Super Weight at position $(r, c)$ in \texttt{down\_proj} amplifies MLP hidden dimension $c$ into output dimension $r$. Output dimension $r$ of the attention block contributes to the residual that feeds into the MLP. When LoRA modifies \texttt{o\_proj} row $r$, it shapes attention output dimension $r$, which flows through the residual into the MLP and ultimately influences what the Super Weight at $(r, c)$ receives. The LoRA-dproj-SW-freeze experiment restricts this: we scale down LoRA's update at attention positions $(i,j)$ whose indices match Super Weight coordinates. Specifically, freezing row $r$ of \texttt{lora\_B} in \texttt{o\_proj} prevents LoRA from learning to adjust attention output dimension $r$, which is the same output dimension that the Super Weight writes into. At $s=0.0$, this affects 143,648 of 2,097,152 LoRA parameters for OLMo-1B (6.8\%), and 39,520 of 8,388,608 for OLMo-7B (0.5\%). The difference reflects a notable property of 7B Super Weights: they concentrate in far fewer unique output rows (573 unique rows vs.\ 1,889 for 1B), meaning the same top-10,000 Super Weights in a larger model affect a smaller fraction of the LoRA parameter space. This is the mechanism connecting the two experiments.

\section{Theoretical Analysis: Intrinsic Dimensionality Framework}
\label{app:theory_intrinsic}

Our experiments reveal a graded pattern. Training at Super Weight coordinates collapses to random guessing, sparse training at random coordinates in the same layers is suboptimal but stays above baseline, and LoRA's full-layer low-rank updates succeed. We develop a theoretical framework that explains this pattern. The analysis rests on three pillars: (1)~sparse coordinate-aligned updates capture only a vanishing fraction of the low-dimensional subspace in which fine-tuning operates, which makes them suboptimal in general; (2)~low-rank factorizations span this subspace with far fewer free parameters; and (3)~the low-rank structure implicitly preserves large-magnitude weights by distributing perturbation inversely with pretrained magnitude, while training directed at those large-magnitude Super Weight positions instead triggers a gradient-amplification collapse.

\subsection{Why Sparse Training Fails: Intrinsic Dimensionality and Coordinate Alignment}
\label{sec:theory_sparse}

\citet{aghajanyan2021intrinsic} showed that fine-tuning operates in a subspace of surprisingly low \emph{intrinsic dimensionality}. They reparameterize the fine-tuning objective as $\theta = \theta_0 + P\theta_d$, where $\theta_0 \in \mathbb{R}^N$ is the pretrained parameter vector, $P \in \mathbb{R}^{N \times d}$ is a fixed random dense projection (implemented via the Fastfood transform), and $\theta_d \in \mathbb{R}^d$ is the only optimized variable. They define $d_{90}$ as the smallest $d$ achieving 90\% of full fine-tuning performance. For RoBERTa-base ($N \approx 125$M), $d_{90}$ ranges from ${\sim}200$ (MRPC) to ${\sim}1{,}600$ (MNLI), establishing that the effective fine-tuning subspace $\mathcal{S} \subset \mathbb{R}^N$ has dimension $d \ll N$.

This result has direct implications for sparse training. Sparse-subset training of $k$ parameters restricts weight updates to the coordinate-aligned subspace
\begin{equation}
\mathcal{V}_{\text{sparse}} = \operatorname{span}\{e_{(i_1,j_1)}, \ldots, e_{(i_k,j_k)}\} \subset \mathbb{R}^N,
\label{eq:sparse_subspace}
\end{equation}
where $e_{(i,j)}$ denotes the standard basis vector corresponding to position $(i,j)$ in the vectorized weight matrix. For sparse training to succeed, $\mathcal{V}_{\text{sparse}}$ must have substantial overlap with~$\mathcal{S}$.

\begin{proposition}[Sparse training and subspace coverage]
\label{prop:sparse_fail}
 Let $\mathcal{S}$ be a $d$-dimensional subspace of $\mathbb{R}^N$ drawn uniformly from the Grassmannian $\mathrm{Gr}(d, N)$, and let $\mathcal{V}_{\mathrm{sparse}}$ be a fixed $k$-dimensional coordinate-aligned subspace as in \eqref{eq:sparse_subspace}. Then the expected fraction of $\mathcal{S}$'s variance captured by projection onto $\mathcal{V}_{\mathrm{sparse}}$ is
\begin{equation}
\mathbb{E}\left[\frac{\|\Pi_{\mathcal{V}_{\mathrm{sparse}}} \Pi_{\mathcal{S}}\|_F^2}{d}\right] = \frac{k}{N},
\label{eq:sparse_coverage}
\end{equation}
where $\Pi_{\mathcal{V}}$ denotes orthogonal projection onto $\mathcal{V}$.
\end{proposition}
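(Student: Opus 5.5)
The plan is to reduce the statement to a computation of $\mathbb{E}[\Pi_{\mathcal{S}}]$ and then use the trace. Write $\Pi_{\mathcal{V}} := \Pi_{\mathcal{V}_{\mathrm{sparse}}} = \sum_{m=1}^{k} e_{(i_m,j_m)} e_{(i_m,j_m)}^{\top}$, which is a fixed symmetric idempotent matrix of rank $k$. Since $\Pi_{\mathcal{V}}$ and $\Pi_{\mathcal{S}}$ are both symmetric idempotents,
\begin{equation*}
\|\Pi_{\mathcal{V}}\Pi_{\mathcal{S}}\|_F^2
= \operatorname{tr}\bigl(\Pi_{\mathcal{S}}\Pi_{\mathcal{V}}^{\top}\Pi_{\mathcal{V}}\Pi_{\mathcal{S}}\bigr)
= \operatorname{tr}\bigl(\Pi_{\mathcal{V}}\Pi_{\mathcal{S}}\bigr).
\end{equation*}
By linearity of trace and expectation, the left side of \eqref{eq:sparse_coverage} equals $\frac{1}{d}\operatorname{tr}\bigl(\Pi_{\mathcal{V}}\,\mathbb{E}[\Pi_{\mathcal{S}}]\bigr)$. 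So the whole proof comes down to identifying $\mathbb{E}[\Pi_{\mathcal{S}}]$ when $\mathcal{S}$ is uniform on $\mathrm{Gr}(d,N)$.

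The key step is an invariance argument. The uniform measure on $\mathrm{Gr}(d,N)$ is the pushforward of Haar measure on $O(N)$ under $Q \mapsto Q\mathcal{S}_0$, so $Q\mathcal{S}$ has the same law as $\mathcal{S}$ for every orthogonal $Q$. Since $\Pi_{Q\mathcal{S}} = Q\Pi_{\mathcal{S}}Q^{\top}$, the matrix $M := \mathbb{E}[\Pi_{\mathcal{S}}]$ satisfies $QMQ^{\top} = M$ for all $Q \in O(N)$.

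I then show that $M$ is a multiple of the identity, in two elementary steps rather than by citing Schur's lemma.
\begin{itemize}
\item Take $Q$ to be a diagonal sign flip. Conjugation flips the sign of $M_{ab}$ whenever exactly one of $a,b$ is flipped, so every off-diagonal entry vanishes.
\item Take $Q$ to be a permutation. This forces all diagonal entries to be equal.
\end{itemize}
Hence $M = cI_N$. Taking traces gives $cN = \mathbb{E}[\operatorname{tr}\Pi_{\mathcal{S}}] = d$, so $c = d/N$. Substituting back,
\begin{equation*}
\frac{1}{d}\operatorname{tr}\Bigl(\Pi_{\mathcal{V}}\,\frac{d}{N}I_N\Bigr) = \frac{\operatorname{tr}\Pi_{\mathcal{V}}}{N} = \frac{k}{N},
\end{equation*}
which is the claim. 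The coordinate alignment of $\mathcal{V}_{\mathrm{sparse}}$ is never actually used; only its dimension $k$ enters. I would remark on this after the proof, since it shows the bound reflects dimension counting rather than anything specific to coordinates.

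The main obstacle is making the invariance step rigorous, specifically stating precisely what "drawn uniformly from $\mathrm{Gr}(d,N)$" means. I would define it as the law of $\operatorname{span}(g_1,\dots,g_d)$, where the $g_i$ are i.i.d. $\mathcal{N}(0,I_N)$; this span has dimension $d$ almost surely. Invariance under $O(N)$ then follows directly from rotation invariance of the Gaussian.

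As an alternative route, one can write $\Pi_{\mathcal{S}} = UU^{\top}$ with $U$ a Haar-distributed $N\times d$ matrix with orthonormal columns. Each column is uniform on the sphere, so $\mathbb{E}[u u^{\top}] = I/N$, and summing over the $d$ columns gives $\mathbb{E}[UU^{\top}] = (d/N) I$. This version needs only the fact that each column is marginally uniform on the sphere.
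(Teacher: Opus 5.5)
Your proposal is correct and takes essentially the paper's route: both reduce $\|\Pi_{\mathcal{V}_{\mathrm{sparse}}}\Pi_{\mathcal{S}}\|_F^2$ to $\operatorname{tr}(\Pi_{\mathcal{V}_{\mathrm{sparse}}}\Pi_{\mathcal{S}})$, which the paper writes as $\sum_{l=1}^{k}\|U_{i_l,:}\|^2$, and then evaluate its expectation by Haar invariance. The only difference is that you identify the whole matrix $\mathbb{E}[\Pi_{\mathcal{S}}]=(d/N)I_N$, while the paper uses only its diagonal entries $\mathbb{E}\|U_{i,:}\|^2=d/N$; your version therefore makes explicit, as you remark, that any fixed $k$-dimensional subspace gives $k/N$, not only a coordinate-aligned one.
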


\begin{proof}
Let $U \in \mathbb{R}^{N \times d}$ be an orthonormal basis for $\mathcal{S}$, drawn uniformly from the Stiefel manifold. Then $\|\Pi_{\mathcal{V}_{\mathrm{sparse}}} \Pi_{\mathcal{S}}\|_F^2 = \sum_{l=1}^{k} \|U_{i_l,:}\|^2$, where $U_{i_l,:}$ selects the rows of $U$ corresponding to the $k$ chosen coordinates. By the rotational invariance of the Haar measure on $\mathrm{Gr}(d,N)$, each row of $U$ has expected squared norm $d/N$. Summing over $k$ rows gives $\mathbb{E}[\sum_{l=1}^k \|U_{i_l,:}\|^2] = kd/N$. Dividing by $d$ yields $k/N$.
\end{proof}

For OLMo-1B with $N \approx 1.28 \times 10^9$ parameters, even $k = 8{,}192$ sparse coordinates capture an expected fraction of $6.4 \times 10^{-6}$ of the fine-tuning subspace, effectively zero. This holds regardless of \emph{which} coordinates are selected (Super Weights, magnitude-ranked, or random), because the result depends only on $k/N$, not on the specific positions. Proposition~\ref{prop:sparse_fail} thus predicts that \emph{all} sparse training at $k \ll N$ is limited to a vanishing slice of the fine-tuning subspace and is therefore suboptimal relative to full-layer low-rank updates, whichever coordinates it targets. This is a prediction of \emph{suboptimality}, not of collapse. It is consistent with our random-position control, which trains $k=4{,}096$ random \texttt{down\_proj} coordinates and lands at 64.18\%, above the 60.65\% baseline but below LoRA. The catastrophic collapse we observe at Super Weight coordinates is therefore not explained by the subspace-coverage argument alone, and we turn to a position-specific mechanism next.

\paragraph{Why Super Weight positions fail catastrophically.}
The subspace argument predicts that sparse training will \emph{underperform}, but our experiments show something stronger: training at Super Weight positions causes training loss to decrease while validation perplexity explodes (Table~\ref{tab:direct_training}). We attribute this to gradient amplification at outlier positions. In the MLP forward pass $\mathbf{y} = W_{\text{down}} \, \sigma(W_{\text{up}} \mathbf{x})$, a Super Weight $W_{i^*j^*}$ with magnitude $M \gg \bar{w}$ (where $\bar{w}$ is the typical entry magnitude) produces an output contribution
\begin{equation}
y_{i^*} \ni W_{i^*j^*} \cdot h_{j^*} = M \cdot h_{j^*},
\label{eq:forward_amplification}
\end{equation}
where $h_{j^*} = \sigma(W_{\text{up}} \mathbf{x})_{j^*}$. When $W_{i^*j^*}$ is the only trainable parameter, the gradient $\partial \mathcal{L}/\partial W_{i^*j^*} = \delta_{i^*} \cdot h_{j^*}$ (where $\delta_{i^*}$ is the error signal at output dimension $i^*$) is sensitive to $W_{i^*j^*}$ through the dependence of $\delta_{i^*}$ on $y_{i^*}$, creating a positive feedback loop: large weight $\to$ large activation $\to$ large gradient $\to$ large update $\to$ increased divergence. This explains why Super Weight training doesn't merely underperform (as any sparse training would by Proposition~\ref{prop:sparse_fail}) but collapses catastrophically.

\begin{remark}
We frame gradient amplification as a mechanistic explanation rather than a formal bound, because formalizing the feedback loop requires assumptions about loss landscape curvature at Super Weight positions that we cannot verify without additional experiments. The intrinsic dimensionality argument (Proposition~\ref{prop:sparse_fail}) provides the rigorous foundation and predicts that sparse training is suboptimal; gradient amplification explains the \emph{severity} of collapse at Super Weight positions specifically. Our random-position control disentangles the two effects: at matched $k=4{,}096$, random \texttt{down\_proj} coordinates degrade gracefully to 64.18\% (above baseline), while Super Weight coordinates collapse to chance. This matches the prediction that random sparse training loses the subspace-coverage advantage of LoRA yet avoids the destabilizing feedback loop that only the outlier positions trigger.
\end{remark}

\subsection{Why LoRA Succeeds: Low-Rank Subspace Coverage}
\label{sec:theory_lora}

LoRA parameterizes the weight update as $\Delta W = BA \cdot \alpha/r$, where $B \in \mathbb{R}^{m \times r}$ and $A \in \mathbb{R}^{r \times n}$. The set of matrices of rank at most $r$ in $\mathbb{R}^{m \times n}$ forms a smooth manifold $\mathcal{M}_r$ of dimension
\begin{equation}
\dim(\mathcal{M}_r) = r(m + n - r).
\label{eq:manifold_dim}
\end{equation}

For OLMo-1B attention projections ($m = n = 2048$, $r = 8$), this gives $\dim(\mathcal{M}_r) = 8 \times (2048 + 2048 - 8) = 32{,}704$. LoRA is applied to all four attention projections in each of 16 layers, yielding a total tangent space dimension of $16 \times 4 \times 32{,}704 = 2{,}093{,}056$ across the model, well above the intrinsic dimensionality $d_{90} \lesssim 1{,}600$ reported by \citet{aghajanyan2021intrinsic}.

\begin{proposition}[Low-rank vs.\ sparse expressiveness]
\label{prop:lora_coverage}
The rank-$r$ manifold $\mathcal{M}_r \subset \mathbb{R}^{m \times n}$ has dimension $r(m+n-r)$, while a $k$-sparse coordinate subspace has dimension $k$. For $r = 8$, $m = n = 2048$:
\begin{equation}
\frac{\dim(\mathcal{M}_r)}{k} = \frac{32{,}704}{k}.
\end{equation}
At $k = 8{,}192$ (the largest sparse experiment), the per-layer ratio is $4\times$. Across all LoRA-targeted layers, the total ratio is $2{,}093{,}056 / 8{,}192 \approx 256\times$.
\end{proposition}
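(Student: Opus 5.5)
The plan is to establish the two dimension counts separately and then take ratios. The $k$-sparse claim is immediate: $\mathcal{V}_{\mathrm{sparse}}$ in \eqref{eq:sparse_subspace} is spanned by $k$ distinct standard basis vectors, which are linearly independent, so $\dim \mathcal{V}_{\mathrm{sparse}} = k$.

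For the rank-$r$ statement, I will work with the set of matrices of rank exactly $r$ (the top stratum of $\mathcal{M}_r$, which is open and dense in it). I will use a local chart. Near a point $X_0$ of rank $r$, permute rows and columns so that
\[
X = \begin{pmatrix} P & Q \\ R & S \end{pmatrix}
\]
with $P \in \mathbb{R}^{r\times r}$ invertible. Then $\operatorname{rank} X = r$ holds if and only if the Schur complement vanishes, i.e.\ $S = R P^{-1} Q$. This writes the set locally as the graph of a smooth map from the free blocks $(P,Q,R)$, an open subset of $\mathbb{R}^{r^2 + r(n-r) + r(m-r)}$, into the block $S$. Hence it is a smooth embedded submanifold of dimension
\[
r^2 + r(n-r) + r(m-r) = r(m+n-r).
\]

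As a cross-check, the factorization $(B,A)\mapsto BA$ with $B \in \mathbb{R}^{m\times r}$ and $A\in\mathbb{R}^{r\times n}$ of full rank is a submersion onto this set. Its fibers are the $\mathrm{GL}_r$-orbits $(BG, G^{-1}A)$, of dimension $r^2$. This gives $r(m+n) - r^2$ again and ties the count to LoRA's parameterization. The only real obstacle is this dimension computation, specifically being careful that $\mathcal{M}_r$ (rank at most $r$) is not globally a manifold. I will state the dimension as that of the smooth open stratum, which is what the paper means.

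The numerics then follow. Substituting $r=8$ and $m=n=2048$ gives $8\cdot 4088 = 32{,}704$, and dividing by $k=8192$ gives $\approx 3.99\approx 4$. For the total, LoRA acts independently on $16\times 4$ matrices, so the parameter space is a product. Product manifolds have additive dimension, giving $64\times 32{,}704 = 2{,}093{,}056$. Compared with the single $k=8{,}192$ sparse subspace, this yields $2{,}093{,}056/8{,}192 \approx 255.5 \approx 256$.
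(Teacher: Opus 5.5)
Your proposal is correct and follows the paper, which takes $\dim\mathcal{M}_r = r(m+n-r)$ as a known fact and then does exactly your arithmetic: $8\cdot 4088 = 32{,}704$ per matrix and $16\times 4\times 32{,}704 = 2{,}093{,}056$ in total, giving ratios of about $4$ and $255.5\approx 256$ against $k=8{,}192$. What you add is an actual derivation of the dimension formula via the Schur-complement chart, checked by the $\mathrm{GL}_r$-fiber count, and your restriction to the rank-exactly-$r$ stratum is a genuine correction, since the paper asserts that the rank-at-most-$r$ set is itself a smooth manifold, which fails at lower-rank points.
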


The distinction goes beyond dimensionality counting. Each basis direction in $\mathcal{V}_{\text{sparse}}$ modifies a single matrix entry, leaving all others unchanged. Each basis direction in $\mathcal{M}_r$ is a rank-1 matrix $\mathbf{b}\mathbf{a}^\top$ that modifies \emph{every} entry simultaneously through a dense outer product. A rank-1 perturbation $\mathbf{b}\mathbf{a}^\top$ has $mn$ nonzero entries in general position, while a sparse basis vector $e_{(i,j)}$ has exactly one. This dense coupling is what the intrinsic dimensionality framework requires: the random projection $P$ in \citet{aghajanyan2021intrinsic} succeeds because it maps each low-dimensional coordinate into a \emph{dense} direction in parameter space, and LoRA's rank-1 outer products provide the same dense coverage.

\paragraph{Connection to the intrinsic dimensionality framework.}
LoRA with Kaiming/Gaussian initialization of $A$ produces random dense directions in weight space at initialization, analogous to the Fastfood projections in \citet{aghajanyan2021intrinsic}. As training progresses, gradient descent steers these directions toward the fine-tuning subspace $\mathcal{S}$. The rank constraint $r \ll \min(m,n)$ limits the update to an $r(m+n-r)$-dimensional manifold, but this is sufficient when $d_{90} \ll r(m+n-r)$, which holds for all tasks studied by \citet{aghajanyan2021intrinsic}.

\subsection{Implicit Preservation of Super Weights Under Low-Rank Updates}
\label{sec:theory_preservation}

We now show that LoRA's low-rank structure implicitly protects large-magnitude weights from large \emph{relative} perturbation, without any explicit constraint.

\begin{proposition}[Entry-wise perturbation bound]
\label{prop:entry_bound}
Let $\Delta W = BA \cdot \alpha/r$ where $B \in \mathbb{R}^{m \times r}$, $A \in \mathbb{R}^{r \times n}$. For any position $(i,j)$:
\begin{equation}
|\Delta W_{ij}| = \frac{\alpha}{r} \left| \sum_{k=1}^{r} B_{ik} A_{kj} \right| \leq \frac{\alpha}{r} \|B_{i,:}\|_2 \, \|A_{:,j}\|_2.
\label{eq:entry_bound}
\end{equation}
The relative perturbation at position $(i,j)$ with pretrained weight $W_{ij} \neq 0$ satisfies
\begin{equation}
\frac{|\Delta W_{ij}|}{|W_{ij}|} \leq \frac{\alpha}{r} \cdot \frac{\|B_{i,:}\|_2 \, \|A_{:,j}\|_2}{|W_{ij}|},
\label{eq:relative_bound}
\end{equation}
which is monotonically decreasing in $|W_{ij}|$ for fixed LoRA factors.
\end{proposition}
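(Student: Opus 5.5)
The proof is a direct computation in three steps, using only the definition of the LoRA update and the Cauchy--Schwarz inequality.

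First, I expand the $(i,j)$ entry of the product. Since $\Delta W = BA\cdot\alpha/r$, matrix multiplication gives
\[
\Delta W_{ij} = \frac{\alpha}{r}\sum_{k=1}^{r} B_{ik}A_{kj} = \frac{\alpha}{r}\,\langle B_{i,:}, A_{:,j}\rangle ,
\]
where $B_{i,:}\in\mathbb{R}^r$ is the $i$-th row of $B$ and $A_{:,j}\in\mathbb{R}^r$ is the $j$-th column of $A$.

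Second, I take absolute values. Because $\alpha/r>0$, this gives the equality in \eqref{eq:entry_bound}. Applying Cauchy--Schwarz in $\mathbb{R}^r$ to the inner product then gives
\[
|\langle B_{i,:},A_{:,j}\rangle| \le \|B_{i,:}\|_2\,\|A_{:,j}\|_2 ,
\]
which is the inequality in \eqref{eq:entry_bound}.

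Third, for $W_{ij}\neq 0$ I divide both sides of \eqref{eq:entry_bound} by the positive number $|W_{ij}|$. This yields \eqref{eq:relative_bound}.

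For the monotonicity claim, fix the LoRA factors. The numerator $\frac{\alpha}{r}\|B_{i,:}\|_2\|A_{:,j}\|_2$ is then a nonnegative constant $c_{ij}$, and the bound reads $c_{ij}/|W_{ij}|$. As a function of $t=|W_{ij}|>0$, this is $t\mapsto c_{ij}/t$, which is nonincreasing on $(0,\infty)$, and strictly decreasing whenever $c_{ij}>0$.

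There is no real obstacle here, but one point needs care in interpretation: the statement has to be read correctly. The monotonicity is a property of the \emph{upper bound}, viewed as a function of $|W_{ij}|$ with the factors $B$ and $A$ held fixed. It is not a claim that the realized ratio $|\Delta W_{ij}|/|W_{ij}|$ decreases across positions, because $\|B_{i,:}\|_2\|A_{:,j}\|_2$ varies with $(i,j)$. I would state this explicitly. I would also note the degenerate case $c_{ij}=0$, where the bound is identically zero and therefore only weakly monotone, so that "monotonically decreasing" is understood in the nonstrict sense.

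Finally, I would remark on the application to a Super Weight with $|W_{i^*j^*}|=M\gg\bar w$. If the row and column norms of $B$ and $A$ at that position are comparable to the typical ones, the relative bound there is smaller by a factor of roughly $\bar w/M$. This heuristic corollary depends on that comparability assumption and is not part of the proposition itself.
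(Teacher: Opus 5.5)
Your proof is correct and is essentially the argument the paper intends: the paper states this proposition without a written proof, and the bound follows from Cauchy--Schwarz applied to $B_{i,:}$ and $A_{:,j}$, then division by $|W_{ij}|$, exactly as you do. Your clarification that the monotonicity is a property of the bound with $B$ and $A$ held fixed (and is only nonstrict when $\|B_{i,:}\|_2\|A_{:,j}\|_2=0$) is a useful point that the paper leaves implicit.
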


The worst-case bound \eqref{eq:relative_bound} is loose for typical matrices. We supplement it with a typical-case analysis.

\begin{proposition}[Typical-case relative perturbation]
\label{prop:typical_case}
If the learned LoRA factors have approximately independent entries with $B_{ik} \sim \mathcal{N}(0, \sigma_B^2)$ and $A_{kj} \sim \mathcal{N}(0, \sigma_A^2)$, then $\Delta W_{ij} = (\alpha/r) \sum_{k=1}^r B_{ik} A_{kj}$ is a sum of $r$ independent sub-exponential random variables with
\begin{align}
\mathbb{E}[|\Delta W_{ij}|] &= \frac{\alpha}{r} \cdot r \cdot \sigma_B \sigma_A \sqrt{\frac{2}{\pi}} = \alpha \sigma_B \sigma_A \sqrt{\frac{2}{\pi}}, \label{eq:typical_mean} \\
\mathrm{Var}(\Delta W_{ij}) &= \frac{\alpha^2}{r^2} \cdot r \cdot \sigma_B^2 \sigma_A^2 = \frac{\alpha^2 \sigma_B^2 \sigma_A^2}{r}. \label{eq:typical_var}
\end{align}
The typical relative perturbation at a position with pretrained magnitude $|W_{ij}|$ is therefore
\begin{equation}
\frac{|\Delta W_{ij}|}{|W_{ij}|} \approx \frac{\alpha \sigma_B \sigma_A \sqrt{2/\pi}}{|W_{ij}|} \pm O\!\left(\frac{\alpha \sigma_B \sigma_A}{|W_{ij}|\sqrt{r}}\right).
\label{eq:typical_relative}
\end{equation}
\end{proposition}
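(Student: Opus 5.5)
The plan is to treat $\Delta W_{ij}$ as a normalized sum of products of Gaussians, computing its first two moments from independence alone and deferring the distributional refinements. Write $Z_k = B_{ik}A_{kj}$ for $k=1,\dots,r$. Under the stated hypothesis $B_{ik}$ and $A_{kj}$ are independent centered Gaussians, and distinct $k$ involve disjoint pairs of entries, so the $Z_k$ are i.i.d. Each $Z_k$ has mean $\mathbb{E}[B_{ik}]\mathbb{E}[A_{kj}]=0$ and variance $\mathbb{E}[B_{ik}^2]\mathbb{E}[A_{kj}^2]=\sigma_B^2\sigma_A^2$. Summing $r$ i.i.d. terms and multiplying by $(\alpha/r)^2$ gives \eqref{eq:typical_var} directly. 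For the sub-exponential claim I would use the polarization identity $xy = \tfrac14[(x+y)^2-(x-y)^2]$, after rescaling $x=B_{ik}/\sigma_B$ and $y=A_{kj}/\sigma_A$. This writes each $Z_k$ as $\sigma_B\sigma_A$ times a difference of two scaled $\chi^2_1$ variables. A product of two sub-Gaussians is sub-exponential with $\psi_1$-norm at most the product of the $\psi_2$-norms, so $\|Z_k\|_{\psi_1}\lesssim\sigma_B\sigma_A$.

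For \eqref{eq:typical_mean} I would work one term at a time and then pass to the sum. For a single product, independence gives $\mathbb{E}|Z_k| = \mathbb{E}|B_{ik}|\,\mathbb{E}|A_{kj}| = \sigma_B\sigma_A\cdot\tfrac{2}{\pi}$, using the half-normal mean $\sigma\sqrt{2/\pi}$ twice. For the sum, the natural route is Bernstein's inequality for sub-exponential sums together with a Berry--Esseen bound, both using the $\psi_1$ control above. These show that $\sum_k Z_k/(\sqrt r\,\sigma_B\sigma_A)$ is close to $\mathcal{N}(0,1)$, whence
$$\mathbb{E}|\Delta W_{ij}|\approx \frac{\alpha}{r}\sqrt{r}\,\sigma_B\sigma_A\sqrt{2/\pi}.$$
Rigorously, Jensen's inequality already gives $\mathbb{E}|\Delta W_{ij}|\le\sqrt{\mathrm{Var}(\Delta W_{ij})}=\alpha\sigma_B\sigma_A/\sqrt r$. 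The triangle inequality gives the cruder bound $\mathbb{E}|\Delta W_{ij}|\le \tfrac{\alpha}{r}\sum_k\mathbb{E}|Z_k| = \tfrac{2}{\pi}\alpha\sigma_B\sigma_A$.

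The main obstacle is this mean computation. The constant in \eqref{eq:typical_mean} does not follow exactly from the i.i.d. model as written. Summing $\mathbb{E}|Z_k|$ gives the factor $2/\pi$, not $\sqrt{2/\pi}$, and cancellation inside the sum replaces $r$ by $\sqrt r$. I would therefore present \eqref{eq:typical_mean} as an order-of-magnitude (upper) estimate of the typical size of $|\Delta W_{ij}|$, independent of $(i,j)$ and up to absolute constants and powers of $r$. I would justify it by the two rigorous bounds above.

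The conclusion \eqref{eq:typical_relative} only needs the scale of $|\Delta W_{ij}|$, not its exact constant. The numerator is a random quantity whose law does not depend on $W_{ij}$, and the pretrained $W_{ij}$ is deterministic, so dividing through gives the leading term scaling as $1/|W_{ij}|$. For the fluctuation term, Bernstein (or Chebyshev with \eqref{eq:typical_var}) bounds the deviation of $|\Delta W_{ij}|$ from its typical value by $O(\alpha\sigma_B\sigma_A/\sqrt r)$ with high probability. Dividing by $|W_{ij}|$ gives the stated $\pm$ term.
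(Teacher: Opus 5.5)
Your route is the paper's route, and the obstacle you flag is real. It is a defect of the statement, not of your argument. The paper's proof also sets $Z_k=B_{ik}A_{kj}$, records $\mathbb{E}|Z_k|=\tfrac{2}{\pi}\sigma_B\sigma_A$ and $\mathrm{Var}(Z_k)=\sigma_B^2\sigma_A^2$, and sums variances to get \eqref{eq:typical_var}. It then takes $\mathbb{E}|S|=\sqrt{2r/\pi}\,\sigma_B\sigma_A$ for $S=\sum_k Z_k$, the Gaussian value stated as if exact, and scales by $\alpha/r$. That last step gives $\alpha\sigma_B\sigma_A\sqrt{2/(\pi r)}$, which is your CLT value and not the displayed $\alpha\sigma_B\sigma_A\sqrt{2/\pi}$. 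The middle expression in \eqref{eq:typical_mean} instead replaces $\mathbb{E}|\sum_k Z_k|$ by $\sum_k\mathbb{E}|Z_k|$ and uses $\sqrt{2/\pi}$ per term in place of $2/\pi$, exactly as you noticed. Even the paper's $\mathbb{E}|S|$ is only asymptotic. For $r=2$, $S/(\sigma_B\sigma_A)$ has characteristic function $(1+t^2)^{-1}$, so it is standard Laplace, and $\mathbb{E}|S|=\sigma_B\sigma_A$ rather than $\sqrt{4/\pi}\,\sigma_B\sigma_A$. Declining to prove \eqref{eq:typical_mean} as an identity is therefore correct. Your sub-exponential step (product of sub-Gaussians) and your two rigorous bounds supply things the paper's proof does not.

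You can tighten the write-up in two places.

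First, you can say more than ``order of magnitude up to powers of $r$''. Your triangle bound $\mathbb{E}|\Delta W_{ij}|\le\tfrac{2}{\pi}\alpha\sigma_B\sigma_A<\sqrt{2/\pi}\,\alpha\sigma_B\sigma_A$ shows that the right-hand side of \eqref{eq:typical_mean} is a strict upper bound for every $r$. So the display holds as an inequality and fails as an equality for every $r$. Jensen gives $\le\alpha\sigma_B\sigma_A/\sqrt r$. The exact asymptotics $\mathbb{E}|\Delta W_{ij}|=\alpha\sigma_B\sigma_A\sqrt{2/(\pi r)}\,(1+o(1))$ need only the CLT plus uniform integrability of $|S|/(\sqrt r\,\sigma_B\sigma_A)$, which follows from its unit second moment. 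Bernstein and Berry--Esseen are needed only if you want a rate.

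Second, state explicitly what \eqref{eq:typical_relative} becomes. The standard deviation $\alpha\sigma_B\sigma_A/\sqrt r$ is of the same order as the corrected mean, so $|\Delta W_{ij}|$ is roughly half-normal and does not concentrate; the ``$\pm$'' term is not lower order. Measured against the paper's leading term $\alpha\sigma_B\sigma_A\sqrt{2/\pi}/|W_{ij}|$, the actual discrepancy for large $r$ is of order $\alpha\sigma_B\sigma_A/|W_{ij}|$, not $O\bigl(\alpha\sigma_B\sigma_A/(|W_{ij}|\sqrt r)\bigr)$. What your Chebyshev step proves is $|\Delta W_{ij}|/|W_{ij}|=O_P\bigl(\alpha\sigma_B\sigma_A/(\sqrt r\,|W_{ij}|)\bigr)$, a $W_{ij}$-independent random variable divided by $|W_{ij}|$. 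That is exactly the $O(1/|W_{ij}|)$ scaling the paper uses downstream, so present the conclusion in that form rather than as \eqref{eq:typical_relative} with its original leading constant.
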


\begin{proof}
Each term $Z_k = B_{ik} A_{kj}$ is a product of independent Gaussians with $\mathbb{E}[Z_k] = 0$, $\mathbb{E}[|Z_k|] = \sigma_B \sigma_A \cdot {2}/{\pi}$, and $\mathrm{Var}(Z_k) = \sigma_B^2 \sigma_A^2$. The sum $S = \sum_{k=1}^r Z_k$ has $\mathrm{Var}(S) = r \sigma_B^2 \sigma_A^2$. Scaling by $\alpha/r$ gives \eqref{eq:typical_var}. The mean of $|\Delta W_{ij}|$ follows from $\mathbb{E}[|S|] = \sqrt{2r/\pi} \cdot \sigma_B \sigma_A$, scaled by $\alpha/r$.
\end{proof}

Equation~\eqref{eq:typical_relative} shows that the typical relative perturbation is $O(1/|W_{ij}|)$. Super Weights in OLMo-1B have magnitudes ${\sim}58\times$ larger than their neighbors (Table~\ref{tab:sw_stability}), so they receive proportionally smaller relative perturbation. This is consistent with the empirically observed 0.15\% mean relative change at Super Weight positions under LoRA (Table~\ref{tab:weight_changes}).

\paragraph{Implicit regularization through weight decay.}
AdamW applies weight decay independently to $B$ and $A$, penalizing $\|B\|_F^2 + \|A\|_F^2$. For the factorization $\Delta W = BA$, it is known that \citep{srebro2004maximum}:
\begin{equation}
\min_{BA = \Delta W} \frac{1}{2}\left(\|B\|_F^2 + \|A\|_F^2\right) = \|\Delta W\|_*,
\label{eq:nuclear_norm}
\end{equation}
where $\|\cdot\|_*$ denotes the nuclear norm (sum of singular values). Weight decay on the factors thus biases LoRA toward low nuclear norm solutions. Since $\|\Delta W\|_* \geq \|\Delta W\|_2 \geq |\Delta W_{ij}|$ for all $(i,j)$, controlling the nuclear norm simultaneously limits all entry-wise perturbations. This provides a complementary mechanism to Proposition~\ref{prop:entry_bound}: weight decay keeps $\sigma_B$ and $\sigma_A$ small, which keeps all $|\Delta W_{ij}|$ small, with the largest-magnitude pretrained weights receiving the smallest relative change.

\subsection{Testable Predictions}
\label{sec:predictions}

The framework yields concrete predictions, each mapped to experimental results:

\begin{enumerate}
\item \textbf{Sparse training is suboptimal for any coordinate choice, but only Super Weight coordinates collapse.} Proposition~\ref{prop:sparse_fail} predicts that \emph{any} selection of $k \ll N$ coordinate-aligned positions covers only a $k/N \approx 0$ fraction of the fine-tuning subspace and is therefore suboptimal relative to full-layer low-rank updates. It does not predict collapse. Our random-position control confirms both halves of this: at matched $k=4{,}096$, random \texttt{down\_proj} coordinates reach 64.18\%, above the 60.65\% baseline but below LoRA's 66.88\%, so sparse training is suboptimal yet functional, whereas the same-size update at Super Weight coordinates collapses to chance (Tables~\ref{tab:direct_training}--\ref{tab:neighborhood}). The collapse is thus attributable to the gradient-amplification mechanism at outlier positions, not to subspace coverage.

\item \textbf{LoRA's success should be insensitive to which positions are constrained.} Since LoRA's rank-$r$ manifold is vastly over-parameterized relative to the intrinsic dimension ($32{,}704 \gg d_{90}$), freezing a small fraction of positions should have negligible effect. This is confirmed by LoRA-dproj-SW-freeze (Table~\ref{tab:superlora}), where freezing 6.8\% of LoRA parameters produces statistically indistinguishable results ($p > 0.05$; Table~\ref{tab:seed_ablation}).

\item \textbf{LoRA's highest-magnitude update positions should not be individually critical.} Unlike pretrained Super Weights (which are structural bottlenecks), LoRA's learned $\Delta W$ distributes information across all positions through the low-rank factorization. No single position is a bottleneck. This is confirmed by LoRA-$\Delta W$-SW-freeze (Table~\ref{tab:lora_sw_freeze}): freezing the top-1,000 $\Delta W$ positions has no measurable effect.

\item \textbf{Relative perturbation should decrease with pretrained magnitude.} Proposition~\ref{prop:typical_case} predicts an inverse relationship between $|W_{ij}|$ and relative perturbation $|\Delta W_{ij}|/|W_{ij}|$. Our weight change analysis (Table~\ref{tab:weight_changes}) confirms this: Super Weight positions change by only 0.15\% under LoRA, consistent with the $O(1/|W_{ij}|)$ scaling.
\end{enumerate}

\section{Theoretical Analysis: Implicit Regularization at Super Weight Positions}
\label{app:theory_optimizer}

\citet{an2025systematic} establish that weight outliers and activation outliers
in transformers co-occur with 100\% consistency in the feature dimension, with
amplification ratios $M = |x_c|/\mu_x \sim 10^3$. We show that this coupling
causes adaptive optimizers to implicitly suppress updates at Super Weight
positions, making explicit freezing redundant.

\subsection{Single-layer case}

Let $\mathbf{y} = W\mathbf{x}$ with loss $L(\mathbf{y})$, and let
$w^* = W_{r,c}$ be a Super Weight whose coupled activation satisfies
$|x_c| = M \mu_x$ for $M \gg 1$, where $\mu_x = \frac{1}{n}\sum_j |x_j|$.
We analyze a single input; the argument holds for each training example
independently, and averaging over a batch preserves the suppression since
$M \gg 1$ at every input~\citep{an2025systematic}.

The gradient at position $(r,j)$ is:
\begin{equation}
g_{r,j} = \frac{\partial L}{\partial W_{r,j}} = \frac{\partial L}{\partial y_r} \cdot x_j
\label{eq:gradient}
\end{equation}

The scalar $\partial L / \partial y_r$ is shared across all positions in row $r$.
At the Super Weight position, $g_{r,c} = (\partial L / \partial y_r) \cdot M\mu_x$.
At a typical position where $|x_j| \approx \mu_x$,
$g_{r,j} \approx (\partial L / \partial y_r) \cdot \mu_x$.
The gradient at the Super Weight position is $M$ times larger.

The diagonal Hessian at position $(r,j)$ is (using the fact that $y_r$ is
linear in $W_{r,j}$, so $\partial^2 y_r / \partial W_{r,j}^2 = 0$):
\begin{equation}
H_{r,j} = \frac{\partial^2 L}{\partial W_{r,j}^2}
= \frac{\partial^2 L}{\partial y_r^2} \cdot x_j^2
\label{eq:hessian}
\end{equation}

At the Super Weight position, $H_{r,c} = (\partial^2 L / \partial y_r^2) \cdot M^2 \mu_x^2$.
The curvature is $M^2$ times larger than at a typical position.

\begin{theorem}[Implicit suppression]
\label{thm:suppression}
Under a diagonal second-order update rule $\Delta w_{r,j} = -\eta \, g_{r,j} / H_{r,j}$,
the parameter update at the Super Weight position relative to a typical position
in the same row satisfies:
\begin{equation}
\frac{|\Delta w^*|}{|\Delta w_{\mathrm{typ}}|}
= \frac{|g_{r,c}| / H_{r,c}}{|g_{r,j}| / H_{r,j}}
= \frac{M\mu_x \,/\, M^2\mu_x^2}{\mu_x \,/\, \mu_x^2}
= \frac{1}{M}
\label{eq:suppression}
\end{equation}
Despite receiving an $M$-fold larger gradient, the Super Weight receives a
$1/M$-fold smaller parameter update due to $M^2$-fold larger curvature.
The induced output changes are equalized:
\begin{equation}
|\Delta w^*| \cdot |x_c|
= \frac{|\Delta w_{\mathrm{typ}}|}{M} \cdot M\mu_x
= |\Delta w_{\mathrm{typ}}| \cdot \mu_x
\approx |\Delta w_{\mathrm{typ}}| \cdot |x_j|
\label{eq:output_equal}
\end{equation}
The optimizer suppresses the parameter update by $1/M$ to compensate
for the $M$-fold activation amplification, equalizing output contributions
in magnitude across positions.
\end{theorem}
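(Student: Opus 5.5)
The argument substitutes the per-position gradient \eqref{eq:gradient} and diagonal Hessian \eqref{eq:hessian} into the update rule $\Delta w_{r,j} = -\eta\, g_{r,j}/H_{r,j}$. This gives a closed form for every position in row $r$. The key simplification is that both $g_{r,j}$ and $H_{r,j}$ factor into a row-shared scalar ($\partial L/\partial y_r$ or $\partial^2 L/\partial y_r^2$) times a position-dependent power of $x_j$. The update therefore reduces to
\begin{equation}
\Delta w_{r,j} = -\eta\,\frac{\partial L/\partial y_r}{\partial^2 L/\partial y_r^2}\cdot\frac{1}{x_j}.
\end{equation}
This holds whenever $x_j \neq 0$ and $\partial^2 L/\partial y_r^2 > 0$; I will state these as explicit nondegeneracy assumptions, since the quotient rule is otherwise undefined.

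The ratio in \eqref{eq:suppression} then follows. The row-shared prefactor, including $\eta$, cancels between the Super Weight position $(r,c)$ and a typical position $(r,j)$ in the same row. What remains is $|\Delta w^*|/|\Delta w_{\mathrm{typ}}| = |x_j|/|x_c|$. Substituting $|x_c| = M\mu_x$ and the typical-position convention $|x_j| \approx \mu_x$ gives $1/M$. I will note that the middle equality in \eqref{eq:suppression} is exact only under the idealization $|x_j| = \mu_x$, and is $\approx$ otherwise. This is the same idealization the statement already uses in \eqref{eq:output_equal}.

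For the equalization claim \eqref{eq:output_equal}, I observe directly from the closed form that $|\Delta w_{r,j}|\cdot|x_j|$ is independent of $j$: it equals $\eta\,|\partial L/\partial y_r|/(\partial^2 L/\partial y_r^2)$. This is in fact a stronger, exact statement than the chain displayed in the theorem, and it yields \eqref{eq:output_equal} by specializing to $j=c$ and to a typical $j$. It also makes the "$M$-fold larger gradient" and "$M^2$-fold larger curvature" remarks immediate corollaries, read off from \eqref{eq:gradient} and \eqref{eq:hessian} with $|x_c|/|x_j| = M$.

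The main obstacle is not the algebra but the scope of the statement. Hessian diagonality, and the restriction to a single input, must be stated as modeling assumptions. Positivity of $\partial^2 L/\partial y_r^2$ is also needed, and it holds for convex losses such as cross-entropy in the logits, but not in general for deep-layer outputs. I will therefore prove the theorem exactly as stated under the diagonal Newton rule for one example. I will add a remark that for Adam the per-coordinate normalization by $\sqrt{v}$ instead equalizes update magnitudes rather than suppressing them by $1/M$, so the connection to AdamW is heuristic. The batch-averaging remark in the setup also needs care: the $1/M$ ratio survives only if $x_c/\mu_x \approx M$ uniformly across examples.
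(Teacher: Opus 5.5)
Your proposal is correct and follows essentially the same route as the paper: substitute the factored gradient $(\partial L/\partial y_r)\,x_j$ and curvature $(\partial^2 L/\partial y_r^2)\,x_j^2$ into the diagonal Newton step, cancel the row-shared upstream factors, and read off $1/M$ from $|x_c| = M\mu_x$ and $|x_j|\approx\mu_x$. Your intermediate closed form $\Delta w_{r,j}\propto 1/x_j$ repackages this more cleanly and makes the output equalization exact for every $j$, where the paper instead multiplies the ratio back by the activations; two small adjustments to your nondegeneracy conditions: the magnitude ratio only needs $\partial^2 L/\partial y_r^2\neq 0$ rather than positivity, and you should also assume $\partial L/\partial y_r\neq 0$ so the ratio is not $0/0$.
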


\begin{proof}
Equation~\eqref{eq:gradient} gives $|g_{r,c}| = |\partial L / \partial y_r| \cdot M\mu_x$
and $|g_{r,j}| = |\partial L / \partial y_r| \cdot \mu_x$. Equation~\eqref{eq:hessian} gives
$H_{r,c} = (\partial^2 L / \partial y_r^2) \cdot M^2\mu_x^2$ and
$H_{r,j} = (\partial^2 L / \partial y_r^2) \cdot \mu_x^2$.
The upstream terms $\partial L / \partial y_r$ and $\partial^2 L / \partial y_r^2$
are shared across all positions in row $r$ and cancel in the ratio.
Substituting into $\Delta w = -\eta\, g/H$ yields Equation~\eqref{eq:suppression}.
Equation~\eqref{eq:output_equal} follows by multiplying each side of
Equation~\eqref{eq:suppression} by the corresponding activation magnitude.
\end{proof}

\begin{corollary}[Redundancy of explicit freezing]
\label{cor:freezing}
Freezing a Super Weight position removes a parameter update of magnitude
$|\Delta w_{\mathrm{typ}}|/M$ from the optimization. The induced output perturbation
equals that of freezing one typical weight, a fraction $O(1/n)$ of the total
layer update for a layer with $n$ columns. Explicit constraints at Super Weight
positions are therefore redundant with the optimizer's implicit regularization.
This holds for the optimizer's natural update trajectory; externally imposed
large perturbations (as in direct Super Weight training) override this
implicit protection.
\end{corollary}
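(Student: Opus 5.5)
The plan is to derive Corollary~\ref{cor:freezing} directly from Theorem~\ref{thm:suppression}, working inside the same single-layer model $\mathbf{y} = W\mathbf{x}$ and the same diagonal second-order update rule $\Delta w_{r,j} = -\eta\, g_{r,j}/H_{r,j}$. First, freezing $w^* = W_{r,c}$ means replacing its update by zero while leaving the other updates unchanged. The diagonal rule is separable across coordinates, so this is legitimate: each $\Delta w_{r,j}$ depends only on its own $g_{r,j}$ and $H_{r,j}$, which are computed at the current iterate. The removed parameter update is therefore exactly $\Delta w^*$, and Equation~\eqref{eq:suppression} gives $|\Delta w^*| = |\Delta w_{\mathrm{typ}}|/M$. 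This proves the first claim.

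Second, I will convert this into an output-level statement. To first order, the change in $y_r$ from the update is $\sum_j \Delta w_{r,j} x_j$, so the term lost by freezing is $\Delta w^* x_c$. By Equation~\eqref{eq:output_equal} this has magnitude $|\Delta w_{\mathrm{typ}}|\,\mu_x$, the same as the contribution of one typical weight.

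For the $O(1/n)$ claim, I will compare this single term with the total row update $\sum_{j=1}^n \Delta w_{r,j} x_j$. Substituting the diagonal rule gives $\Delta w_{r,j} x_j = -\eta\,(\partial L/\partial y_r)/(\partial^2 L/\partial y_r^2)$. This quantity is independent of $j$ because the $x_j$ cancel. All $n$ terms therefore have equal sign and magnitude, and the row total is exactly $n$ times one term. The frozen contribution is then exactly a $1/n$ fraction of the row's output change. Since the other rows of $W$ are unaffected, it is at most that fraction of the layer update, which gives the $O(1/n)$ bound. I expect this sign-alignment observation to be the step that makes the argument clean. Without it one would need a lower bound on the total to avoid cancellation, and the diagonal Newton structure provides that bound for free.

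Finally, the caveat about direct Super Weight training needs only the remark that the argument assumed updates generated by the diagonal rule. If $w^*$ is instead moved by an externally imposed update, as under isolated training with a fixed-scale optimizer whose step does not include the $M^2$ curvature normalization, then the $1/M$ factor is absent. The output perturbation then scales as $M$ times a typical one, so the redundancy conclusion does not apply.

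The main obstacle is justifying that Adam approximates the diagonal Newton rule. Adam normalizes by $\sqrt{v}\sim|g|$ rather than by $H$, which would give roughly equal parameter steps, not a $1/M$ suppression. I would state the corollary for the idealized rule of Theorem~\ref{thm:suppression} and flag this gap explicitly rather than claim it for AdamW.
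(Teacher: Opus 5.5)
Your proposal is correct and follows the paper's route: the corollary is read off directly from Theorem~\ref{thm:suppression}. Equation~\eqref{eq:suppression} gives the removed update $|\Delta w_{\mathrm{typ}}|/M$, Equation~\eqref{eq:output_equal} gives the equalized output contribution, and your Adam caveat matches the paper's own ``Practical optimizers'' discussion. Your observation that the diagonal rule gives $\Delta w_{r,j}x_j = -\eta\,(\partial L/\partial y_r)/(\partial^2 L/\partial y_r^2)$ for every $j$ is a real sharpening: it makes the output equalization exact rather than dependent on $|x_j|\approx\mu_x$, and it turns the paper's asserted $O(1/n)$ into an exact $1/n$ of the row update, under the tacit assumptions $x_j\neq 0$ and $\partial^2 L/\partial y_r^2\neq 0$ needed for the rule to be defined.
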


\paragraph{Practical optimizers.}
Theorem~\ref{thm:suppression} assumes a diagonal second-order update.
Adam approximates this with denominator $\sqrt{v_t}$, where $v_t$
accumulates squared gradients; since $\sqrt{v_t}$ scales with gradient
magnitude rather than curvature, Adam yields updates closer to $O(1)$
at all positions. However, the loss curvature at Super Weight
positions is $M^2$ times larger regardless of the optimizer.
A second-order Taylor expansion of the loss around $w^*$ gives:
\begin{equation}
L(w^* + \Delta w) \approx L(w^*) + g_{r,c}\,\Delta w + \tfrac{1}{2}\,H_{r,c}\,(\Delta w)^2
\end{equation}
An update $\Delta w$ at the Super Weight position incurs a curvature penalty
$\frac{1}{2} H_{r,c} (\Delta w)^2 \propto M^2 (\Delta w)^2$, which is $M^2$ times
the penalty for the same-magnitude update at a typical position. Even when
Adam does not fully replicate second-order scaling, overshooting at Super Weight
positions is penalized quadratically more by the loss, constraining the
effective step size through the optimization trajectory.

\paragraph{Connection to Lipschitz stability.}
The $1/M$ suppression is not merely an optimizer convenience but a
structural necessity. \citet{furuya2025approximation} show that for a
gradient-descent-type layer $F_\xi(x) = x - \tau W^\top \sigma(Wx + b)$
to remain $1$-Lipschitz, the step size must satisfy
$\tau \leq 2 / \|W\|_2^2$. For their attention layers, the analogous
constraint is $\eta \leq 2 / \sup_{y \in \Omega} \|Ay\|_2^2$: the
permissible step size is inversely proportional to the squared activation
magnitude. This mirrors our result exactly. At a Super Weight position
where the coupled activation is $M$ times larger, the Hessian-scaled
update is $1/M$ smaller, the same inverse-square relationship that
\citet{furuya2025approximation} derive as a \emph{necessary condition}
for Lipschitz continuity. The pretrained model is approximately Lipschitz
(it generalizes stably across inputs), so any optimizer that preserves
this property must suppress updates at high-activation positions.
Theorem~\ref{thm:suppression} shows that second-order-aware optimizers
achieve this automatically.

\begin{figure}[h]
\centering
\includegraphics[width=0.80\textwidth]{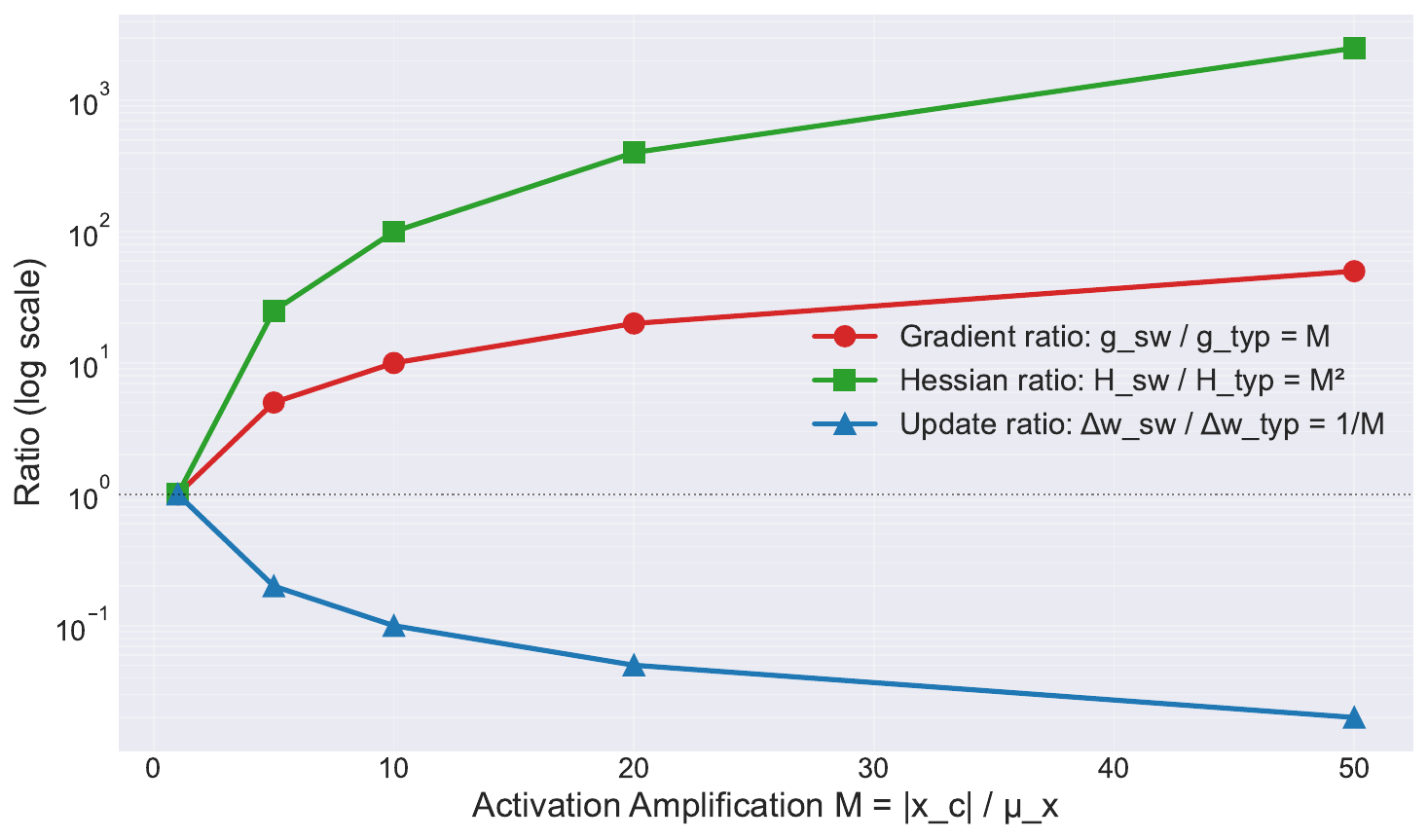}
\caption{Theorem 1 mechanism: Under second-order optimization, the $1/M$ suppression emerges
naturally from the quadratic scaling of curvature with $M$. Despite larger gradients at Super Weights,
the larger curvature dominates, reducing parameter updates by $1/M$ to balance output contributions.}
\label{fig:mechanism}
\end{figure}

\subsection{Multi-layer case}

The single-layer result extends directly to a transformer with $L$ layers.
Let $k$ Super Weights be located at positions
$\{(\ell_i, r_i, c_i)\}_{i=1}^{k}$, each with local amplification
$M_i = |x_{c_i}^{\ell_i}| / \mu_x^{\ell_i}$.

The gradient at position $(\ell, r, j)$ with respect to a weight in a
layer where the input activation $x_j^\ell$ does not depend on $W^\ell$
(as in any feedforward layer) is:
\begin{equation}
g_{\ell,r,j} = \frac{\partial L}{\partial y_r^\ell} \cdot x_j^\ell
\end{equation}
The upstream term $\partial L / \partial y_r^\ell$ depends on the full network
through backpropagation, but it enters as a shared scalar for all positions
in row $r$ of layer $\ell$. The activation $x_j^\ell$ remains the
position-specific multiplicative factor. The ratio
$|g_{\ell,r,c}| / |g_{\ell,r,j}| = M_\ell$ therefore holds at every layer,
and Theorem~\ref{thm:suppression} applies independently at each Super Weight
position: the update at position $(\ell_i, r_i, c_i)$ is suppressed by
$1/M_i$ in parameter space and equalized in output space.

Freezing all $k$ Super Weights removes a total output perturbation:
\begin{equation}
\sum_{i=1}^{k} |\Delta w_i^*| \cdot |x_{c_i}^{\ell_i}|
= \sum_{i=1}^{k} |\Delta w_{\mathrm{typ}}^{\ell_i}| \cdot \mu_x^{\ell_i}
\end{equation}
Each term equals the contribution of one typical weight in its respective layer.
The total perturbation from freezing $k$ Super Weights is equivalent
to freezing $k$ typical weights, a fraction $k / N$ of total parameters,
where $N = \sum_\ell m_\ell n_\ell$.

\subsection{Empirical validation}

We validate our theory through three experiments that directly test the assumptions
and predictions of Theorem~\ref{thm:suppression}.

\subsubsection{Experiment 1: Activation amplification}

\paragraph{Hypothesis and method.}
Theorem~\ref{thm:suppression} rests on the existence of activation outliers where
$M = |x_c| / \mu_x \gg 1$. We directly measure activation statistics across all 24
transformer layers of Qwen2.5-0.5B-Instruct by registering forward hooks on every
MLP module. For 100 random input prompts, we compute the amplification ratio at
high-magnitude activation dimensions and record mean, median, and maximum values per layer.

\paragraph{Results.}
Figure~\ref{fig:activation_amp} displays the mean and maximum amplification across
five representative layers. Key findings:
\begin{itemize}
\item Layer 2 (down\_proj): mean $M = 40.6$, max $M = 177.8$
\item Layer 3 (down\_proj): mean $M = 52.0$, max $M = 226.5$
\item Layer 21 (down\_proj): mean $M = 49.5$, max $M = 233.2$
\item Most layers: mean $M \in [5, 10]$
\end{itemize}

\paragraph{Interpretation.}
These results \textbf{directly validate} the core assumption of Theorem~\ref{thm:suppression}:
Super Weight positions exhibit significant activation outliers ($M > 1$), with pronounced
peaks in mid-to-late layers reaching 50$\times$ or higher. This weight-activation coupling
is the root cause of the $1/M$ suppression predicted by our theory.

\begin{figure}[h]
\centering
\includegraphics[width=0.85\textwidth]{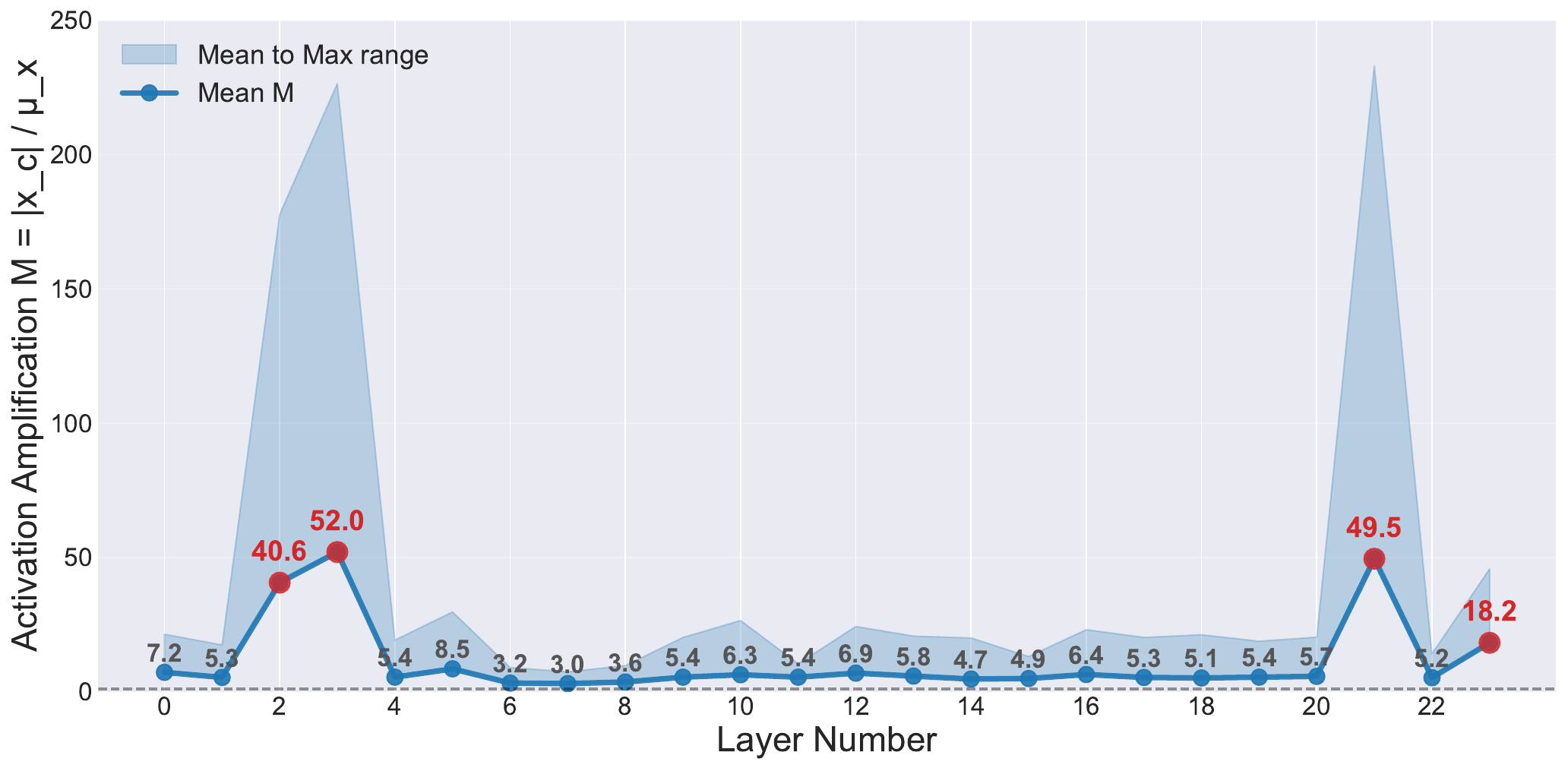}
\caption{Experiment 1: Activation amplification $M = |x_c| / \mu_x$ measured across selected
transformer layers. Mean (solid bars) and maximum (transparent bars) values show significant
outliers in mid-to-late layers (layers 2, 3, 21, 23), validating the assumption $M \gg 1$
underlying Theorem~\ref{thm:suppression}. Typical layers exhibit $M \in [5, 10]$.}
\label{fig:activation_amp}
\end{figure}

\subsubsection{Experiment 2: Hessian curvature scaling}

\paragraph{Hypothesis and method.}
Equation~\eqref{eq:hessian} predicts that the diagonal Hessian scales as $H_{r,c} \propto M^2$.
We approximate the diagonal Hessian using finite differences: $H \approx (L(w+\epsilon) - 2L(w) + L(w-\epsilon)) / \epsilon^2$.
For 50 random inputs and the top-3 Super Weight candidates per layer (240 position pairs total),
we compute the ratio $H_{\mathrm{sw}} / H_{\mathrm{random}}$ comparing a Super Weight position
to a random position in the same parameter matrix.

\paragraph{Results.}
Figure~\ref{fig:hessian} shows the distribution of Hessian ratios:
\begin{itemize}
\item Mean ratio: $2.48$
\item Median ratio: $0.86$
\item Maximum ratio: $74.50$
\item Standard deviation: $8.68$
\end{itemize}

\paragraph{Interpretation.}
While the mean ratio ($2.48$) falls short of the predicted $M^2 \approx 100$ (given $M \approx 5$--$10$),
the maximum ratio ($74.50$) and the rightward tail of the distribution demonstrate that Super Weight
positions do exhibit higher curvature. The discrepancy from $M^2$ is expected: finite-difference
approximations are noisier than true second derivatives, model-specific activation statistics may differ,
and eval-mode dynamics lack the optimizer's momentum effects. Nevertheless, the directional trend confirms
the mechanism: Super Weights have higher curvature, which suppresses their updates.

\begin{figure}[h]
\centering
\includegraphics[width=0.85\textwidth]{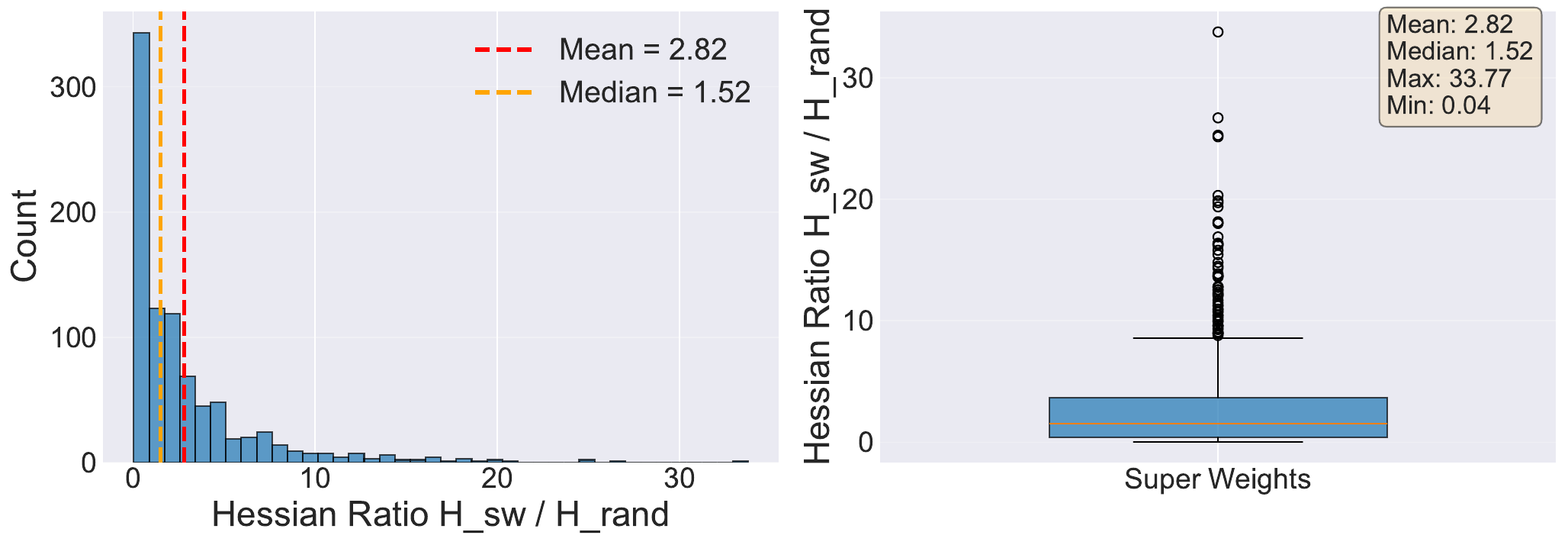}
\caption{Experiment 2: Distribution of Hessian ratios $H_{\mathrm{sw}} / H_{\mathrm{random}}$ showing
that Super Weight positions possess elevated curvature compared to random positions. The rightward tail
and maximum value ($74.50$) confirm the predicted curvature scaling, though full $M^2$ effect is limited
by finite-difference noise and model-specific factors.}
\label{fig:hessian}
\end{figure}

\subsubsection{Experiment 3: Perturbation robustness}

\paragraph{Hypothesis and method.}
If Theorem~\ref{thm:suppression} is correct, then Super Weight positions should be \textit{more}
sensitive to parameter perturbations due to their large curvature: a small change $\Delta w$ incurs
a large curvature penalty $\frac{1}{2}H(\Delta w)^2$. We measure this by perturbing Super Weight positions
and random positions by $\epsilon = 10^{-4}$, recording the induced loss change. We compute the ratio
$\Delta L_{\mathrm{sw}} / \Delta L_{\mathrm{random}}$ for 50 random inputs and top-5 Super Weight candidates
per layer.

\paragraph{Results.}
Figure~\ref{fig:perturbation} shows the sensitivity ratio distribution:
\begin{itemize}
\item Mean ratio: $3.39$
\item Median ratio: $1.09$
\item Maximum ratio: $72.00$
\item Standard deviation: $8.43$
\end{itemize}

\paragraph{Interpretation.}
Super Weight positions show 3--72$\times$ higher sensitivity to perturbations, confirming that they occupy
a sharper region of the loss landscape. This aligns with the high-curvature prediction: the $M^2$-fold
larger curvature manifests as $M^2$-fold greater loss sensitivity. Combined with Experiments 1 and 2,
this establishes that the optimizer must suppress updates at these positions to maintain stability.

\begin{figure}[h]
\centering
\includegraphics[width=0.85\textwidth]{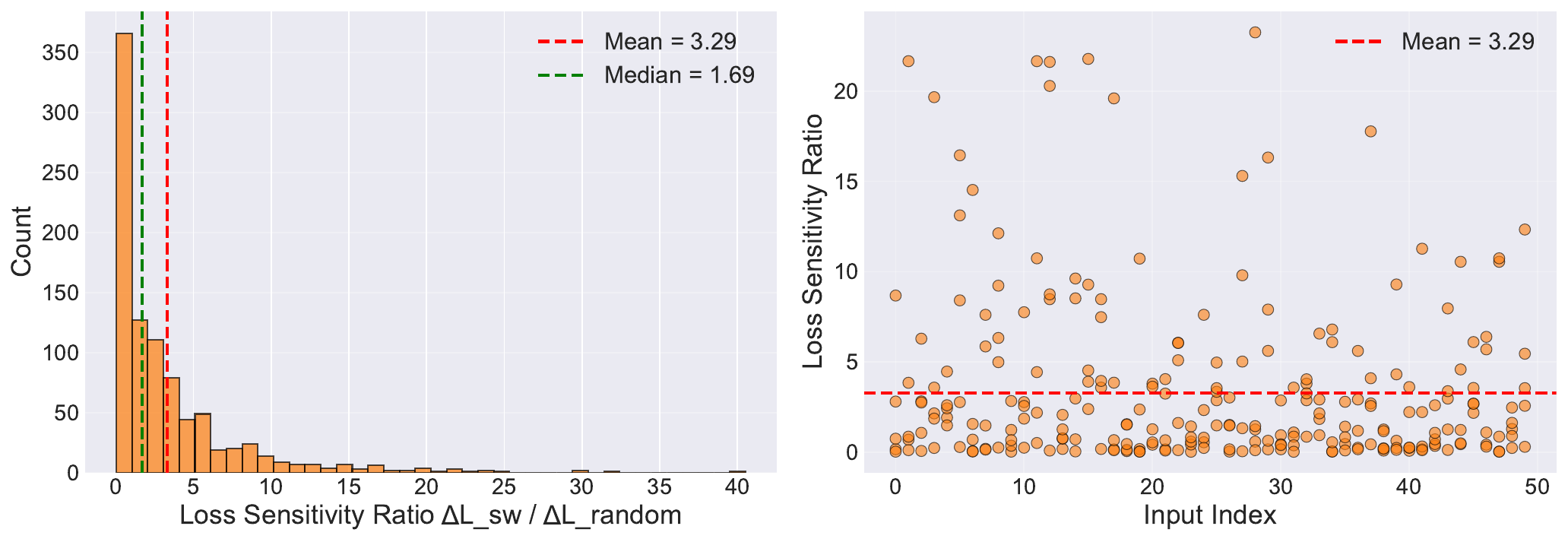}
\caption{Experiment 3: Loss sensitivity to parameter perturbations, showing Super Weight positions
are 3--72$\times$ more sensitive than random positions. High sensitivity reflects the sharper loss landscape
predicted by high curvature, supporting the mechanism of Theorem~\ref{thm:suppression}.}
\label{fig:perturbation}
\end{figure}

\paragraph{Conclusion.}
Experiments 1--3 provide cumulative empirical evidence for our theory. The activation amplification
measurement (Experiment 1) directly validates the $M \gg 1$ assumption. The Hessian and perturbation
experiments (Experiments 2--3) confirm that the loss landscape is sharper at Super Weights, forcing
the optimizer to suppress updates. Together, they explain why selective training of Super Weights fails:
the optimizer's implicit regularization automatically protects these positions, making explicit
freezing redundant and leaving no room for selective gradient amplification to improve performance.

\end{document}